\documentclass[10pt]{article}
\usepackage[margin=1in]{geometry}

\usepackage{url}

\usepackage[utf8]{inputenc} % allow utf-8 input
\usepackage[T1]{fontenc}    % use 8-bit T1 fonts
\usepackage[colorlinks=true,linkcolor=blue,citecolor=blue]{hyperref}   % hyperlinks
\usepackage{url}            % simple URL typesetting
\usepackage{booktabs}       % professional-quality tables
\usepackage{amsfonts}       % blackboard math symbols
\usepackage{nicefrac}       % compact symbols for 1/2, etc.
\usepackage{microtype}      % microtypography
\usepackage{amsmath,amssymb, amsthm, mathtools, dsfont}
\usepackage{cleveref}       % smart cross-referencing
\usepackage{graphicx}
\usepackage[numbers,sort&compress]{natbib}
\usepackage{enumitem}
\usepackage{comment}
\usepackage{bbm}

\usepackage{algorithm}
\usepackage[noend]{algpseudocode}
\usepackage{algorithmicx}

\usepackage[parfill]{parskip}

\usepackage[labelfont=bf]{caption}
\usepackage{subcaption}
\usepackage{wrapfig}

\makeatletter
\newtheorem*{rep@theorem}{\rep@title}
\newcommand{\newreptheorem}[2]{%
\newenvironment{rep#1}[1]{%
 \def\rep@title{#2 \ref{##1}}%
 \begin{rep@theorem}}%
 {\end{rep@theorem}}}
\makeatother

%%%%%%%%%%%%%%%%%%%%%%%%%%%%%%%%
% THEOREMS
%%%%%%%%%%%%%%%%%%%%%%%%%%%%%%%%
\theoremstyle{plain}
\newtheorem{theorem}{Theorem}%[section]
\newreptheorem{theorem}{Theorem}

\newtheorem{lemma}{Lemma} %[theorem]
%[theorem]{Corollary}
\theoremstyle{definition}
%[theorem]{Definition}
\newtheorem{assumption}{A\!\!}%[theorem]
\theoremstyle{remark}
\newtheorem{remark}{Remark} %[theorem]
\usepackage{mathrsfs}  

\usepackage{definitions}

\newcommand{\ind}{\perp\!\!\!\!\perp} 
\newcommand{\notind}{\not\!\perp\!\!\!\perp} 

\newcommand{\bU}{\mathbf{U}}
\newcommand{\bW}{\mathbf{W}}

\newcommand{\bX}{\mathbf{X}}

\newcommand{\bY}{\mathbf{Y}}

\newcommand{\bUt}{\tilde{\bU}}
\newcommand{\bA}{\mathbf{A}}
\newcommand{\bB}{\mathbf{B}}
\newcommand{\bR}{\mathbf{R}}
\newcommand{\bM}{\mathbf{M}}
\newcommand{\bS}{\mathbf{S}}
\newcommand{\bo}{\mathbf{o}}
\newcommand{\bH}{\mathbf{H}}
\newcommand{\bE}{\mathbf{E}}
\newcommand{\bF}{\mathbf{F}}
\newcommand{\bN}{\mathbf{N}}
\newcommand{\bv}{\mathbf{v}}

\newcommand{\bphi}{\boldsymbol{\phi}}
\newcommand{\bpsi}{\boldsymbol{\psi}}
\newcommand{\Rphi}{\mathbf{R}_\phi}
\newcommand{\Rpsi}{\mathbf{R}_\psi}
\newcommand{\bbA}{A}
\newcommand{\bbB}{B}

\DeclareMathOperator{\Domain}{Dom}

%define notations%
\newcommand{\xnew}{x_{\text{new}}}

\title{Adapting to Latent Subgroup Shifts via Concepts and Proxies}

\author{
Ibrahim Alabdulmohsin$^1$, Nicole Chiou$^{2,}\footnotemark[1]$ \hspace{0.1mm}, Alexander D'Amour$^1$, \\ 
Arthur Gretton$^3$, Sanmi Koyejo$^{1,2}$, Matt J. Kusner$^{4,}\footnotemark[1]$ \hspace{0.1mm}, Stephen R. Pfohl$^1$, \\ 
Olawale Salaudeen$^{5,}\footnotemark[1]$ \hspace{0.1mm}, Jessica Schrouff$^{1,}\footnotemark[2]$ \hspace{0.1mm}, Katherine Tsai$^{5,}\footnotemark[1]$
}

\date{
$^1$Google Research \\
$^2$Stanford University \\
$^3$Gatsby Computational Neuroscience Unit \\
$^4$University College London \\
$^5$University of Illinois Urbana-Champaign \\
\footnotetext{Authors listed in alphabetical order}
\footnotetext{Correspondence to m.kusner@ucl.ac.uk or alexdamour@google.com}
\footnotetext[1]{Work completed while at Google Research}
\footnotetext[2]{Now at Deepmind}
}

\begin{document}
\maketitle

\begin{abstract}
We address the problem of unsupervised domain adaptation when the source domain differs from the target domain because of a shift in the distribution of a latent subgroup. When this subgroup confounds all observed data, neither covariate shift nor label shift assumptions apply. We show that the optimal target predictor can be non-parametrically identified with the help of concept and proxy variables available only in the source domain, and unlabeled data from the target. The identification results are constructive, immediately suggesting an algorithm for estimating the optimal predictor in the target. For continuous observations, when this algorithm becomes impractical, we propose a latent variable model specific to the data generation process at hand. We show how the approach degrades as the size of the shift changes, and verify that it outperforms both covariate and label shift adjustment.

\end{abstract}

\section{Introduction}
Distribution shift is a fact of many real-world machine learning systems. For example, imagine we have trained a prediction model on patients of hospital $P$ and would like to apply it to patients of hospital $Q$. However, these hospitals differ in their patient populations along socioeconomic, demographic, and other axes \citep{Finlayson2021-zo}. How can we find the optimal predictor for hospital $Q$, given only labelled data from hospital $P$ and unlabelled data from hospital $Q$? This is the problem of unsupervised domain adaptation \citep{huang2006correcting}. 
Without any assumptions on the shift, this question is impossible to answer: the mapping from features $X$ to labels $Y$ could differ across hospitals in arbitrary ways. 
To address this, approaches typically assume that certain observed distributions are preserved across the shift, \emph{covariate shift}: $p(Y \mid X)\!=\!q(Y \mid X)$ \citep{shimodaira2000improving} or, \emph{label shift}: $p(X \mid Y)\!=\!q(X \mid Y)$ \citep{gart1966comparison}, where $p,q$ are distributions of hospitals $P,Q$.

% issues with assumptions
However, these assumptions are often restrictive for real-world settings, as the shifts encountered are typically more complex (e.g., `compound' shifts \citep{schrouff2022maintaining}).
%\begin{comment}
Here, we focus on one such shift that we call \emph{latent subgroup shift}.
Subgroup shift occurs when both the source $P$ and target $Q$ distributions are composed of a common set of subgroups $U \in \mathcal U$, but the prevalence of these subgroups differs, i.e., $p(U) \neq q(U)$.
The subgroup shift is latent if these subgroups are unobserved in both $P$ and $Q$.
Importantly, the relationships between features $X$ and labels $Y$ can differ between subgroups, such that neither the discriminative distribution $p(Y \mid X)$ nor the generative distribution $p(X \mid Y)$ is preserved across the shift.
In healthcare settings, these subgroups may differ in their exposure to social determinants of health, contributing to differences in health outcomes and patterns of comorbidity, care access, delivery, and treatment \citep{marmot2005social}.

\begin{figure*}[t]
    \centering
    \includegraphics[width=0.8\textwidth]{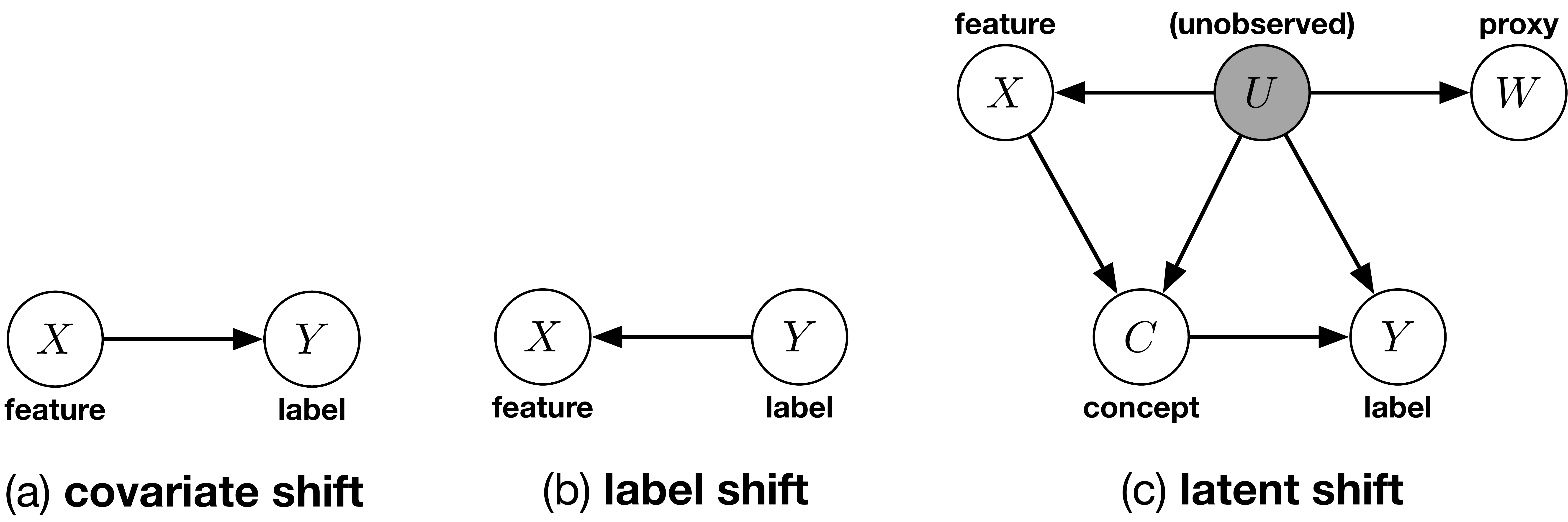}
    \vspace{-1ex}
    \caption{Different domain shift assumptions: (a) $p(X) \!\neq\! q(X)$, (b)  $p(Y) \!\neq\! q(Y)$, (c)  $p(U) \!\neq\! q(U)$.}
    \label{fig:model}
    \vspace{-2ex}
\end{figure*}

To tackle latent subgroup shift, 
we frame learning the optimal $q(Y \mid X)$ as an identification problem.
Our identification strategy combines approaches from proximal causal inference (originally designed to identify intervention distributions $p(Y \mid do(X))$ under unobserved confounding using proxy variables) \citep{kuroki2014measurement}, black box label shift adaptation \citep{lipton2018detecting}, and concept bottleneck modeling \citep{koh2020concept}. We show that it is possible to express $q(Y \mid X)$ in terms of the joint distribution of observables in $P$ and the distribution of unlabeled inputs $X$ in $Q$.
We derive two identification results, one for discrete data and another for continuous data. While the results are constructive, immediately implying an algorithm, estimation requires non-trivial density estimation. Therefore, we describe an alternative approach that leverages stable latent variable \citep{kingma2013auto} models to estimate $q(Y|X)$. 
The proposed approach answers an open question on how to leverage advances in concept bottleneck models \citep{koh2020concept} for  distribution shifts in both $X$ and $Y$. Further, it allows one to learn a single model in the source $P$ which can then be adapted to arbitrary shifts in $Q$. 

{\bf Contributions.} We propose a new approach for adaptation to latent distribution shifts given concepts and proxies, for cases where existing adaptation methods often fail. We formally identify the target distribution for both discrete and continuous variables, then propose effective estimators. We perform a sensitivity analysis that characterizes how our method changes when shift size and proxy strength are varied. We show that our approach 
outperforms multiple baselines including covariate and label shift techniques.

{\bf Notation. } We denote scalars and functions by lowercase letters (e.g., $a$, $a(\cdot)$), vectors by bold lowercase ($\ab$), random variables by capital letters ($A$), matrices by bold, capital letters ($\Ab$), and sets by caligraphic, capital letters ($\mathcal{A}$).  Let $[n]$ denote the set $\{1, 2,  \ldots, n\}$.
\section{Related Work}

There has been a flurry of recent work on improving out-of-distribution generalization (see \citet{shen2021towards}, \citet{wang2022generalizing}, and \citet{zhou2022domain} for three recent surveys). Largely, this work can be divided into two camps: (a) work that learns a single model to work well across shifts, such as work on invariant predictors \citep{arjovsky2019invariant} and, (b) work that adapts a model from a source distribution to a target distribution, given access to limited data in the target. Here we focus on the second class of approaches. 
\paragraph{Model adaptation.}
To obtain an optimal predictor in a new distribution $Q$, one of the most popular assumptions is to localize the shift between distributions $P$ and $Q$ in the features (covariates) $X$, i.e., \emph{covariate shift}: $p(X) \!\neq\! q(X)$.  %\footnote{Here $p,q$ are probability mass functions of the source and target distributions $P,Q$} 
There has been a large body of work devoted to estimating predictors for $Q$ under this setting \citep{shimodaira2000improving,zadrozny2004learning,huang2006correcting,gretton2009covariate,bickel2009discriminative,sugiyama2012machine,chen2016robust,schneider2020improving}. The key assumption in this line of work is that $p(Y \mid X) \!=\! q(Y \mid X)$. Therefore, if one makes the source data appear like the target data (e.g., by reweighing the source classifier loss by $q(X)/p(X)$), one can learn an accurate target classifier. 
The other popular assumption is to localize the shift in the labels $Y$, i.e., \emph{label shift}: $p(Y) \!\neq\! q(Y)$ and $p(X \mid Y) \!=\! q(X \mid Y)$ \citep{gart1966comparison,manski1977estimation,rosenbaum1983central,saerens2002adjusting,forman2008quantifying,storkey2009training,du2012semi,zhang2013domain,lipton2018detecting,azizzadenesheli2019regularized,alexandari2020maximum,garg2020unified,tachet2020domain,wu2021online}. Here one can use a similar approach: learn $q(Y)/p(Y)$ and use it to reweigh a source classifier, adapting it to the target distribution. The assumptions of covariate and label shift can be framed as criteria on the causal structure of the data, shown in Figure~\ref{fig:model}(a)-(b) \citep{scholkopf2012on}. Most theoretical work is on generalization error bounds for covariate shift \citep{sugiyama2005generalization,ben2006analysis,mansour2009domain,ben2010theory,cortes2011domain,johansson2019support} and label shift \citep{gong2016domain}. %These assumptions however limit the applicability of these methods in real-world settings \citep{schrouff2022maintaining}. 

\paragraph{Causality for domain shift.} 
Recently, a line of work has framed domain shift using causal methods \citep{zhang2015multi,Magliacane2018-oh,gong2018causal,chen2020domain,teshima2020few}. 
Most related to our approach is the work of \citet{yue2021transporting}. 
Similar to our setup, they describe a setting where an unobserved latent confounder $U$ shifts the distribution of $X$ and $Y$. %, i.e., Figure~\ref{fig:model} (c) without $C$ and $W$. 
However, different from our work, they target an interventional distribution instead of $q(Y\mid X)$. To do so they learn mappings from $X \sim P$ to $X \sim Q$, and vice-versa. They use these mappings, as well as a variational autoencoder \citep{kingma2013auto}, to generate two `proxies', one for $X$ and $Y$. They assume these proxies are caused by $U$, and they use the result of \citet{miao2018identifying} to identify an invariant `bridge function' to remove the effect of the latent shift. However, this does not guarantee identification of the structural equations mapping $U$ to the proxies, $X$, and $Y$, which is necessary for the procedure to correct for $U$.
\section{Setup and Preliminaries}
\label{sec:setup}

Let $P$ be the source distribution and $Q$ be the target, with probability mass/density functions $p$ and $q$. 
Our goal is to identify the optimal predictor of $Y$ from $X$ in the target: $q(Y|X)$.
To do so, we will make two main assumptions.
First, to make progress in this setting, we assume that we have access to some auxiliary variables that play key roles in the source distribution.

\begin{assumption}\label{asm:graph}
\emph{We also observe auxiliary variables $C$ (concept bottleneck) and $W$ (proxy). All data is generated by the process described in Figure~\ref{fig:model}(c) and is \emph{faithful} and \emph{Markov} \citep{spirtes2000causation} (i.e., conditional independences in the data exist iff they exist in the graph). Crucially, we only observe $(X,C,Y,W)$ in the source $P$ and $X$ in the target $Q$.}
\end{assumption}
\vspace{-1ex}

Formally, the data generation process of Figure~\ref{fig:model}(c) is a probabilistic graphical model  \citep{pearl1988probabilistic}. Given a set of observed variables $\mathcal{V}$ and unobserved variables $\mathcal{U}$, these models define a  functional relationship $f_i$ between each $V_i \!\in'! \mathcal{V}$ and the variables that generate $V_i$ (also called its direct \emph{parents}) $\mathcal{V}_{\mathrm{pa}(i)},\mathcal{U}_{\mathrm{pa}(i)}$ i.e., $V_i = f_i(\mathcal{V}_{\mathrm{pa}(i)}, \mathcal{U}_{\mathrm{pa}(i)})$. These relationships can be described by a directed acyclic graph (DAG), e.g., as in Figure~\ref{fig:model}. A key aspect of these models is that they encode conditional independence relationships between variables in $\mathcal{V}, \mathcal{U}$, that can be derived via d-separation \citep{pearl2000models}. Throughout this work we assume $f_i$ and $\mathcal{U}$ are unknown.

The additional auxiliary variables $C$ and $W$ play specific roles in this graph.
$C$ operates as a ``concept bottleneck'' that mediates the dependence between $X$ and $Y$ within subgroups indexed by $U$.
Meanwhile, $W$ operates as an independent proxy, or noisy observation, of $U$ that is conditionally independent of all other variables.
Both of these properties play key roles in our identification strategy.

Our second assumption defines latent subgroup shift. % that we aim to adapt across.

\begin{assumption}\label{asm:coufounder_shift}
\emph{The shift between $P$ and $Q$ is located in $U$, i.e., there is a} latent shift $p(U) \neq q(U)$, \emph{while $p(\mathcal{V}|U) \!=\! q(\mathcal{V}|U)$, where
$\mathcal{V} \subseteq \{W,X,C,Y\}$.}
\end{assumption}
%\vspace{-1ex}

Under these assumptions, distributions on $U$ or that have $U$ marginalized (i.e., all observed distributions $p(\mathcal{V}) \neq q(\mathcal{V})$ for $\mathcal{V} \subseteq \{W,X,C,Y\}$) will shift between $P$ and $Q$, whereas only distributions conditional on $U$ do not shift. This is a direct generalization of the covariate shift invariance, in which $U\to X\to Y$ and the label shift invariances, in which $U\to Y\to X$. %as here: $p(Y|X) \!=\! q(Y|X)$ and $p(X|Y) \!=\! q(X|Y)$.

Our framework is inspired by (a) concept bottleneck models \citep{koh2020concept} and (b) identification via proxies \citep{kuroki2014measurement}. We briefly review these topics next.

\paragraph{Concept bottleneck models.}
Data in certain settings may contain information beyond features and labels. For instance, in healthcare it is common to not only have raw electronic health record data $X$ (e.g., temperature, blood cultures, ...) and disease labels $Y$, but also physician summaries $C$ such as the presence and spread of infection. \citet{koh2020concept} formalize this learning setup, calling $C$ \emph{concepts}. In general, concepts $C$ are high-level, often interpretable, pieces of information that mediate the relationship between $X$ and $Y$. Prior works have used concepts for diagnosing model failures and for covariate shift 
\citep{kumar2009attribute,lampert2009learning,koh2020concept,Chen2020-dt,Mahinpei2021-ms}. The concept bottleneck model \citep{koh2020concept} was shown to be robust to covariate distribution shifts; here, we show with the appropriate adjustment strategy, such models can also be adapted to subgroup shifts. Another line of work have incorporated concepts into causal models to improve model explanations \citep{goyal2019explaining,bahadori2021debiasing}.

{\bf Proxies.} Our work leverages results in causal effect estimation with proxy variables \citep{kuroki2014measurement,miao2018identifying}. In these works, $W$ is a proxy of $U$ that allows one to identify the causal effect of $C$ on $Y$ in Figure~\ref{fig:model}(c). In our running example, a useful $W$ would be the region where a patient lives as this is often a proxy for SDH quantities, such as income $U$.

\section{Identification Under Latent Shift}
\label{sec:identification}
In this section, we report identification results for the optimal target distribution predictor $q(Y \mid X)$ given observed draws from $p(X, C, Y, W)$ and $q(X)$.
We first present our central adjustment strategy in the case where $U$ is observed in the source distribution.
We then show that, when $C$ and $W$ are observed in the source distribution, we can use this strategy even in cases where $U$ is unobserved.
We consider two such cases: one where all observed variables are discrete, and another where $X$ and $W$ are continuous.
In these latter two cases, the key challenge is to show that the distributions in our adjustment formula, which involve $U$, can be identified in the source domain.

\subsection{Subgroup Adjustment Formula}
To begin, we present our central adjustment formula, considering the case where $U$ is observed in the source distribution $P$, but not in the target distribution $Q$.
We derive the formula by decomposing our target $q(Y \mid X)$, leveraging A\ref{asm:coufounder_shift} and Figure~\ref{fig:model}(c):
\begin{align}
     q(Y|X) &\; \stackrel{(a)}{=} \sum_{i=1}^{k_U} q(Y|X,U=i)q(U=i|X)   \nonumber \\
    &\; \stackrel{(b)}{=} \sum_{i=1}^{k_U} p(Y|X,U=i)\frac{q(X|U=i)q(U=i)}{q(X)} \nonumber \\
    &\; \stackrel{(c)}{=} \sum_{i=1}^{k_U} p(Y|X,U=i)\frac{p(U=i|X)p(X)q(U=i)}{p(U=i)q(X)} \nonumber \\
    &\; \stackrel{(d)}{\propto} \sum_{i=1}^{k_U} p(Y|X,U=i)p(U=i|X)\frac{q(U=i)}{p(U=i)} \label{eq:goal}%\label{eq:q_y_x}
\end{align}
The first equality $(a)$ is given by the chain rule and marginalization. The second $(b)$ is given by A\ref{asm:coufounder_shift}: since $q(Y|X,U\!=\!i)$ conditions on $U$, we have $q(Y|X,U\!=\!i) \!=\! p(Y|X,U\!=\!i)$. The fractional term is given by Bayes rule. The equality $(c)$ is again given by A\ref{asm:coufounder_shift} and Bayes rule: $q(X|U\!=\!i) \!=\! p(X|U\!=\!i) \!=\! p(U\!=\!i|X)p(X)/p(U\!=\!i)$. The proportional $(d)$ is given by the fact that $p(X)/q(X)$ is constant as the left-hand side conditions on these variables.

When $U$ is observed under $P$, all quantities on the final right hand side are directly estimable except $q(U)/p(U)$, because $U$ is not observed under $Q$. Interestingly, this parallels the label shift problem, where distributions conditional on $Y$ are preserved across the distribution shift, but $Y$ is not observed under $Q$.
In fact, the same label shift adaptation identification arguments and techniques can be applied to adjust for $U$ instead!
Here, we adapt the method-of-moments identification argument made in \citet{lipton2018detecting}.
For any function $f(X)$, the identity $q(f(X)) = \sum_{i=1}^{k_U} q(f(X) \mid U=i) q(U)$ can be expanded (using Bayes rule and A~\ref{asm:coufounder_shift}):
\begin{align}
\frac{q(f(X))}{p(f(X))} = \sum_{i=1}^{k_U} p(U=i \mid f(X)) \frac{q(U=i)}{p(U=i)}.\label{eq:label shift LR}
\end{align}
These equations define a linear system, and, for appropriate choices of $f(X)$ and rank conditions on $p(U = i \mid X)$ (see A\ref{asm:inversion} and A\ref{asm:mat_invert} below), we can solve for $q(U=i)/p(U=i)$.
For example, \citet{lipton2018detecting} define $f(X)$ as the decision function of a classifier; in that case the linear system can be written in terms of the confusion matrix of the classifier.
\citet{garg2020unified} discuss other choices, as well as maximum likelihood approaches to learning this likelihood ratio.
Upon solving \eqref{eq:label shift LR}, $q(Y \mid X)$ is identified by \eqref{eq:goal}.

\begin{remark}
This ``observed $U$'' setting is a simplification of the general latent subgroup shift problem, but may be of independent interest. In many applications, especially when $U$ includes sensitive demographic categories, the subgroup label may be collected at training time, but unavailable at deployment time. In such cases, this identification argument would be sufficient for domain adaptation. 
\end{remark} 

\begin{remark}
The identifying expression \eqref{eq:goal} enables adaptation to new distributions $Q$ without retraining any models under $P$. To adapt to a new distribution, we plug in a new estimate of $q(f(X))$ to \eqref{eq:label shift LR}, then evaluation \eqref{eq:goal} at the solution. This \emph{post hoc} property applies to all identification strategies we discuss.
\end{remark}

\subsection{The Error of Covariate/Label Adjustment}

What if we apply covariate or label shift adjustment to the latent subgroup shift setting?

\paragraph{Covariate shift adjustment.}
Assume data follows the latent shift setting of Figure~\ref{fig:model}(c), but we (falsely) believe that the shift between the observed data in $P$, $\{X,C,W,Y\}$, and that of $Q$, $\{X’\}$, is due to covariate shift. The covariate shift assumption implies that $P(Y|X) \!=\! Q(Y|X)$. Given this, we  would start by training a model $f: X \!\rightarrow\! Y$ on the data in $P$ which estimates $P(Y|X)$. We would then use this model on the data $X’$ in $Q$ as an estimate $Q(Y|X)$ (we would only use $X$ to train $f$, and not $(C,W)$, as we only see $X’$ in $Q$). %and covariate shift says nothing about the relationship between $X$ and $(C,W)$). 
However, regardless of the amount of data in $P$ and $X$ there would always be an error between $f(X) := P(Y|X)$ and $Q(Y|X)$. Specifically, at the population level, the (squared) error under latent shift is: 
\begin{align*}
   ( P(Y|X) &- Q(Y|X) )^2 = \Big( \sum_u P(Y|X,u) \big[ P(u|X) - Q(u|X)\big] \Big)^2 \\
   &= \Bigg( \sum_u P(Y|X,u)P(u|X) \Bigg[ 1 -  \frac{P(X)}{Q(X)} \frac{Q(U)}{P(U)} \Bigg] \Bigg)^2, 
\end{align*}
which is non-zero so long as both $P(X) \neq Q(X)$ and $P(U) \neq Q(U)$. This only happens if there is no shift and is easy to verify.

\paragraph{Label shift adjustment.}
Imagine we instead assumed the shift was due to label shift which implies $P(X|Y) = Q(X|Y)$. Given this, $Q(Y|X)$ could be written as: 
\begin{align*}
Q(Y|X) =&\; Q(X|Y)\frac{Q(Y)}{Q(X)} = P(X|Y)\frac{Q(Y)}{Q(X)} \\
=&\; P(Y|X)  \frac{P(X)}{Q(X)}  \frac{Q(Y)}{P(Y)}.
\end{align*}
All of the terms on the right hand side are  estimable, even $Q(Y)/P(Y)$. Specifically, given a trained model $f: X \!\rightarrow\! Y$ on the data in $P$ (estimating $P(Y|X)$), we can estimate $Q(Y) / P(Y)$ using a label shift correction technique. For example, \citet{lipton2018detecting} shows that $Q(f(X)) \!=\! \sum_y P(f(X), y) [ Q(y) / P(y)]$. However, this adjusted estimate also incurs error with respect to the optimal target $Q(Y|X)$ in the latent shift setting. The population (squared) error under latent shift is: 
\begin{align*}
 \Big(P&(Y \mid X) \frac{P(X)}{Q(X)}\frac{Q(Y)}{P(Y)} - Q(Y \mid X)\Big)^2 \\
 =& \Bigg(\sum_u P(Y \mid X, u)P(u \mid X) \frac{P(X)}{Q(X)} \Bigg[\frac{Q(Y)}{P(Y)} - \frac{Q(u)}{P(u)}\Bigg]\Bigg)^2
\end{align*}
which is non-zero so long as $Q(Y) / P(Y) \neq Q(u) / P(u)$ for all $u \in U$. This can only happen if $Q(Y)/P(Y) = 1$, i.e., there is no shift. We can prove this by contradiction. Assume either (a) $Q(Y)/P(Y) > 1$ or (b) $Q(Y)/P(Y) < 1$. In case (a), we have that $Q(u) > P(u)$ for all $u \in U$, and in case (b) we have that $Q(u) < P(u)$. However, neither can be the case as both $Q(U)$ and $P(U)$ must sum to $1$. Therefore, we must have that $Q(Y)/P(Y) = 1$, the no-shift setting, which is again easy to verify.

\subsection{Discrete Observations}

We now state sufficient conditions for identification of $q(Y|X)$ in the latent shift setting.
To begin we assume all observable variables $\{X, C, Y, W\}$ are discrete.

\begin{assumption}\label{asm:k_U}
\emph{$U \in [k_U]$ is discrete s.t., $k_X,k_W \geq k_U$ (recall $k_X,k_W$ are the number of categories of (discrete) $X,W$).}
\end{assumption}
\vspace{-1ex}

Generally, identification requires some restrictions on how $U$ influences the observed variables $\{W, X, C, Y\}$.
The above places such a restriction more generically than restriction functional forms;
all we require is that the support of $U$ is smaller than that of observed variables $X,W$.

\begin{assumption}\label{asm:inversion}
\emph{For every $i \in [k_U]$ where $q(U=i) > 0$ we have  $p(U=i) > 0$, all linear systems have rank at least $k_U$, and $p(Y|C,U=1) \neq p(Y|C,U=2) \neq \cdots \neq p(Y|C,U=k_U)$ $P$-almost everywhere.}
\end{assumption}

The first condition ensures that $q(U=i)/p(U=i)$ is bounded for all $i$. The remaining two conditions are inherited from \citet{kuroki2014measurement}: they ensure that inverses exist and that eigenvectors are unique.
Essentially, they require that all variables depend non-trivially on $U$.
Overall these assumptions are of two types: (1) \textbf{Structural}: A\ref{asm:graph} and A\ref{asm:coufounder_shift} describe how the data and shifts are structured; (2) \textbf{Functional}: S\ref{asm:k_U} and A\ref{asm:inversion} detail conditions on the functions that generate data.

Our main result for discrete data is the following.

\begin{lemma}\label{lem:u}
Given A\ref{asm:graph}--A\ref{asm:inversion}, all probability mass functions over discrete $\{W,X,C,Y,\tilde{U}\}$ in the source $P$ are identifiable, where $\tilde{U}$ is an unknown permutation of $U$.
\end{lemma}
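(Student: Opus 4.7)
The plan is to identify the structural factors of the joint $p(w,x,c,y,u)$, which by A\ref{asm:graph} and the DAG in Figure~\ref{fig:model}(c) factorizes as $p(u)\,p(w|u)\,p(x|u)\,p(c|x,u)\,p(y|c,u)$, and which encodes the independences $W \perp (X,C,Y)\mid U$ (the proxy property) and $X \perp Y \mid C,U$ (the concept-bottleneck property). Together these yield the crucial three-view structure: $W$, $X$, and $Y$ are mutually conditionally independent given $(U,C)$, placing us squarely in the setting of \citet{kuroki2014measurement}, where $U$ can be identified up to a relabeling via a matrix-pencil / spectral argument.

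The key step is to identify $p(W\mid U)$ up to a permutation $\tilde U$ of the latent categories. For each value $c$ of $C$, stack $p(W=w,X=x\mid C=c)$ into a matrix $\bM_c$, and for each $y$ form the sliced matrix $\bM_{c,y}$ with entries $p(W=w,X=x,Y=y\mid C=c)$. The three-view independence gives
\begin{align*}
\bM_c &= \bM_{W|U}\,\mathrm{diag}\!\bigl(p(U\mid C=c)\bigr)\,\bM_{X|U,C=c}^{\top}, \\
\bM_{c,y} &= \bM_{W|U}\,\mathrm{diag}\!\bigl(p(U\mid C=c)\odot p(Y=y\mid C=c,U)\bigr)\,\bM_{X|U,C=c}^{\top}.
\end{align*}
By A\ref{asm:inversion}, $\bM_c$ has rank $k_U$, so after an SVD compression to a $k_U\times k_U$ problem the matrix pencil $(\bM_c,\bM_{c,y})$ admits a generalized eigendecomposition whose eigenvalues are exactly $p(Y=y\mid C=c,U=u)$. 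The distinctness clause of A\ref{asm:inversion} forces these eigenvalues to be distinct, so the decomposition is unique up to a single column permutation, recovering $p(W\mid U)$, $p(X\mid U,C=c)$, $p(Y\mid C=c,U)$, and $p(U\mid C=c)$ up to a common permutation.

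To synchronize these per-$c$ permutations into a single global $\tilde U$, I use that the recovered columns of $\bM_{W|U}$ must coincide across every value of $c$ (and are distinct by the column-rank condition on $\bM_{W|U}$); matching them pins down one permutation. With $\bM_{W|\tilde U}$ identified and of rank $k_U$, for any tuple $V$ built from $(X,C,Y)$ the identity
\[ p(W=w,V=v) = \sum_u p(W=w\mid\tilde U=u)\,p(V=v,\tilde U=u) \]
is a linear system in $p(V,\tilde U)$ with a unique solution. Applied to $V=(X,C,Y)$ this identifies the joint $p(X,C,Y,\tilde U)$; combined with $p(W\mid\tilde U)$ and $W\perp(X,C,Y)\mid \tilde U$, the full joint over $\{W,X,C,Y,\tilde U\}$ is identified, from which every mass function follows.

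The main obstacle is the spectral uniqueness step: showing that A\ref{asm:k_U} and A\ref{asm:inversion} together push a generic rank-$k_U$ factorization all the way to uniqueness up to a single permutation. The rank condition handles invertibility after SVD compression, and the distinctness of $p(Y\mid C,U=\cdot)$ supplies the separated eigenvalues required by the pencil argument. Everything downstream, including the per-$c$ alignment and reconstruction of the remaining conditionals, is routine linear inversion.
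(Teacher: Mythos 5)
Your proposal is correct and follows essentially the same two-stage route as the paper: a Kuroki--Pearl-style spectral argument on the pencil built from $p(W,X\mid c)$ and $p(W,X,Y=y\mid c)$ (the paper writes this as the eigendecomposition of $\bA^{\dagger}\bB=\bS^{-1}\Lambda\bS$, with eigenvalues $p(Y\mid c,\tilde U)$ made distinct by A4) to recover $p(W\mid\tilde U)$ up to a permutation, followed by linear inversion against the identified $p(W\mid\tilde U)$, using $W\ind(X,C,Y)\mid\tilde U$, to recover all remaining distributions involving $\tilde U$. The only differences are cosmetic: the paper fixes a single $(c,y)$ so no cross-$c$ permutation alignment is needed, fixes the eigenvector scale via the all-ones column of $\bS$ rather than by normalizing columns to sum to one, and itemizes the Stage-2 conditionals instead of solving for the full joint $p(X,C,Y,\tilde U)$ in one system.
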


\begin{theorem}[Identifiability for Discrete Observations]\label{thm:disc_q_y_x}
The distribution $q(Y|X)$ is identifiable from discrete $\{W,X,C,Y,\tilde{U}\} \sim P$ and $X \sim Q$.
\end{theorem}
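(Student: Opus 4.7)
The plan is to combine Lemma~\ref{lem:u} with a label-shift-style linear system, patterned on equation~\eqref{eq:label shift LR}, in order to fill in the only piece of the adjustment formula~\eqref{eq:goal} that is not directly supplied by Lemma~\ref{lem:u}: the ratio $q(\tilde U)/p(\tilde U)$. Once that ratio is identified, plugging into~\eqref{eq:goal} written with the permuted latent $\tilde U$ yields $q(Y\mid X)$.

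First I would apply Lemma~\ref{lem:u} to read off, from the source, every source-domain conditional and marginal I need, namely $p(Y\mid X,\tilde U=i)$, $p(\tilde U=i\mid X)$, $p(X\mid \tilde U=i)$, and $p(\tilde U=i)$ for each $i\in[k_U]$. Next I would recover $q(\tilde U)$. Since A\ref{asm:coufounder_shift} gives $p(X\mid U)=q(X\mid U)$, the same invariance holds under any consistent relabeling of $U$, so $p(X\mid \tilde U=i)=q(X\mid \tilde U=i)$. Marginalizing the latent in $Q$ gives
\begin{equation*}
q(X=x) \;=\; \sum_{i=1}^{k_U} p(X=x\mid \tilde U=i)\,q(\tilde U=i), \qquad x\in[k_X].
\end{equation*}
This is a linear system $q_X = M\,q_{\tilde U}$ with $M\in\mathbb{R}^{k_X\times k_U}$, $M_{xi}=p(X=x\mid \tilde U=i)$. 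The left-hand side is estimable from the unlabeled target, and $M$ is identified by Lemma~\ref{lem:u}. By A\ref{asm:k_U} we have $k_X\ge k_U$, and by A\ref{asm:inversion} the relevant linear system has rank $k_U$, so $M$ has full column rank and $q_{\tilde U}$ is the unique solution. The positivity clause of A\ref{asm:inversion} ensures $p(\tilde U=i)>0$ whenever $q(\tilde U=i)>0$, so the ratio $q(\tilde U=i)/p(\tilde U=i)$ is identified for each $i$.

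I would then substitute into the adjustment formula~\eqref{eq:goal} written under the permuted label:
\begin{equation*}
q(Y\mid X) \;\propto\; \sum_{i=1}^{k_U} p(Y\mid X,\tilde U=i)\,p(\tilde U=i\mid X)\,\frac{q(\tilde U=i)}{p(\tilde U=i)},
\end{equation*}
with the proportionality constant pinned down by $\sum_y q(Y=y\mid X)=1$. Every factor on the right is now identified, so $q(Y\mid X)$ is identified.

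The main subtlety I expect is permutation bookkeeping: Lemma~\ref{lem:u} only identifies the joint up to a relabeling of the latent, so I need to verify that the linear system for $q(\tilde U)$ and the sum in~\eqref{eq:goal} use the \emph{same} permutation. This is automatic because both expressions use the conditionals $p(\cdot\mid\tilde U=i)$ delivered by the same invocation of Lemma~\ref{lem:u}, and~\eqref{eq:goal} is symmetric under any consistent relabeling of the summation index; so the unknown permutation $\tilde U\leftrightarrow U$ is immaterial. The remaining content of the proof is simply checking that A\ref{asm:k_U} and A\ref{asm:inversion} yield unique solvability of $q_X=M\,q_{\tilde U}$, which is routine.
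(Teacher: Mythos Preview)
Your proposal is correct and follows essentially the same approach as the paper: invoke Lemma~\ref{lem:u} for all source-side quantities, handle the $U\leftrightarrow\tilde U$ permutation by noting the sum in~\eqref{eq:goal} is invariant to relabeling, and identify the missing target piece via a rank-$k_U$ linear system justified by A\ref{asm:k_U} and A\ref{asm:inversion}. The only cosmetic difference is that the paper writes the system in discriminative form, $\tfrac{q(X)}{p(X)}=\sum_i p(\tilde U=i\mid X)\,\tfrac{q(\tilde U=i)}{p(\tilde U=i)}$, and solves directly for the ratio, whereas you use the Bayes-equivalent generative form $q(X)=\sum_i p(X\mid\tilde U=i)\,q(\tilde U=i)$ and then divide by $p(\tilde U)$; the two systems have the same column space and the argument is otherwise identical.
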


\paragraph{Proof sketches.} We give full proofs in the Appendix and give sketches here. The first key observation for Theorem~\ref{thm:disc_q_y_x} is that all of the steps $(a)$--$(d)$ in eq.~(\ref{eq:goal}) hold when $U$ is replaced with the permutation $\tilde{U}$. This is because (a) $\tilde{U}$ satisfies the same independence conditions as $U$, and (b) $q(Y|X)$ only requires marginalizing over $U$, making the order of the categories of $U$ irrelevant to identification.
Given Lemma~\ref{lem:u}, the only step remaining is to solve \eqref{eq:label shift LR} in terms of $\tilde U$.
A\ref{asm:k_U} and A\ref{asm:inversion} ensure that the system has a solution.

The proof of Lemma~\ref{lem:u} works in two stages: 1. It first demonstrates that $p(W|\tilde{U})$ can be identified, and 2. It shows that once $p(W|\tilde{U})$ is identified, all distributions on $W,X,C,Y,\tilde{U}$ are identified. Stage 1 is done by proving a variation of a result given by \citet{kuroki2014measurement}. They demonstrate that when $k_W\!=\!k_X\!=\!k_U$ and data is generated from the graph of Figure~\ref{fig:model}(c) then it is possible to identify the causal effect $p(Y|do(C))$ (in Theorem 1 \citep{kuroki2014measurement}).
Identifying $p(Y|do(C))$ only requires identifying specific distributions involving $\tilde{U}$, in order to remove its contribution to $Y$, i.e., $p(Y|do(C)) = \sum_{x,u} P(Y|C,X\!=\!x,\tilde{U}\!=\!u)P(X=x,\tilde{U}\!=\!u)$.
However, as we show by construction, the result of \citet{kuroki2014measurement} is stronger. In Stage 1, we recover $p(W|\tilde{U})$ for Figure~\ref{fig:model} (c) by contrasting the distributions $p(X, W \mid c)$ and $p(y, X, W \mid c)$.
Specifically, $p(W|\tilde U)$ can be recovered from the eigendecomposition of $\bA^{-1}\bB$ where, for fixed values of $y$ and $c$, these matrices are as follows,
\begin{equation}
    \resizebox{0.94\hsize}{!}{
    $\underbrace{\begin{bmatrix}
        1 & p(w_1 | c) & \cdots & p(w_{k_W-1} |c) \\
        p(x_1 | c) & p(x_1, w_1 | c) &  \cdots & p(x_1, w_{k_W-1} | c) \\
        \vdots & \vdots & \ddots & \vdots \\
        p(x_{k_X-1} | c) & p(x_{k_X-1}, w_1 | c), & \cdots & p(x_{k_X-1}, w_{k_W-1} | c)
    \end{bmatrix}}_{\bA}$
    \hspace{3mm}
    $\underbrace{\begin{bmatrix}
        p(y | c) & p(y, w_1 | c) & \cdots & p(y, w_{k_W-1} | c) \\
        p(y, x_1 | c) & p(y, x_1, w_1 | c) & \cdots & p(y, x_1, w_{k_W-1} | c) \\
        \vdots & \vdots & \ddots & \vdots \\
        p(y, x_{k_X-1} | c) & p(y, x_{k_X-1}, w_1 | c) & \cdots & p(y, x_{k_X-1}, w_{k_W-1} | c)
    \end{bmatrix}}_{\bB}$.    \label{eq:AinvB}
    }
    \nonumber
\end{equation}
In the above $w_1$ is shorthand for $W=1$ (similarly for $X$). In Stage 2, we identify all distributions involving $\tilde{U}$. The key observation behind this second result is that conditioning on $\tilde{U}$ d-separates $W$ from the rest of the observed variables. Thus, factorizing observed distributions using $\tilde{U},W$ can form linear systems. In these systems, the unknown distributions involving $\tilde{U}$ can be recovered by some function of $p(W|\tilde{U})$ (identified in Stage 1) and observables.

\paragraph{Estimation.} As both proofs are constructive, we can immediately use them to design an approach to estimate $q(Y|X)$. This is shown in Algorithm~\ref{algo:method}.

\begin{algorithm}[t]
\caption{Estimating $q(Y|X)$.}\label{algo:method}
\begin{algorithmic}[1]
% {\small
  \Require
  source $\mathcal{P} \!=\! \{(w_i, x_i, c_i, y_i)\}_{i=1}^n$;
  target $\mathcal{Q} \!=\! \{x_j\}_{j=1}^m$;
  For any variables $G \!\in\! [k_G], H \!\in\! [k_H]$ let $p(\mathbf{G}|\mathbf{H})$ be a $k_G \times k_H$ matrix of probabilities s.t. $p(\mathbf{G}|\mathbf{H})_{ij} = p(G\!=\!i|H\!=\!j)$
  \State Using $\mathcal{P}$, form matrices $\mathbf{A},\mathbf{B}$ described in eq.~(\ref{eq:AinvB})
  \State Decompose $\mathbf{A}^{-1}\mathbf{B} \!=\! \mathbf{S}^{-1}\Lambda\mathbf{S}$ to get $p(\mathbf{W}|\tilde{\mathbf{U}})$ from $\mathbf{S}^{-1}$
  \State Compute $p(\tilde{\mathbf{U}}|\mathbf{X}) = p(\mathbf{W}|\tilde{\mathbf{U}})^{-1}p(\mathbf{W}|\mathbf{X})$
  \State Compute $q(\bUt)/p(\bUt) = p(\bUt|\bX)^{-1}[q(\bX)/p(\bX)]$
  \State Compute $p(\bY| X, \bUt) = p(\bY | X, \bW)\Big(\frac{p(\bW | \bUt) \circ p(\bUt | X)}{p(\bW | X)}\Big)^{-1}$
  \State Compute $q(\bY|x_j) \!\propto\! p(\bY| x_j, \bUt)\big[ p(\tilde{\mathbf{U}}|x_j) \!\circ\! \frac{q(\bUt)}{p(\bUt)} \big], \forall x_j \!\in\! \mathcal{Q}$.
%   \EndIf
%  }
\end{algorithmic}
\end{algorithm}

\subsection{Continuous Observations}

We now consider the case where $W,X,C,Y$ are continuous. This setting turns out to be more challenging, as, unlike in the discrete case, we cannot enumerate all of the states and apply finite dimensional eigendecomposition to estimate the associated probability mass functions. Instead, we must apply functional analysis tools to estimate nonparametric continuous probability density functions, which require more care to ensure existence and estimability.
To this end, we make the following assumptions.

\begin{assumption}\label{asm:w_u_lindep}
There exists a $c\in\Domain(C)$, such that $p(X\mid U\!=\!i, c), p(X\mid U\!=\!j, c)$ are linearly independent for all $(i,j) \in[k_U]$ for $i \neq j$. Similarly, 
$p(W\mid U\!=\!i), p(W\mid U\!=\!j)$ are linearly independent for all $(i,j) \in[k_U]$ for $i \neq j$.
\end{assumption}

This assumption allows us to identify the distributions of $p(W\mid U)$ and $p(X\mid U,C)$, which are crucial to the eigendecomposition technique.

\begin{assumption}\label{asm:mat_invert}
There exist distinct points $x_1,\ldots,x_{k_U}\!\in\!\Domain(X)$ such that the matrix $[p(U\!=\!j\mid x_i)]_{i,j}\in\RR^{k_U\times k_U}$ is invertible.
\end{assumption}

This assumption ensures that the $q(U)/p(U)$ system in eq.~\eqref{eq:label shift LR} has a unique solution. Note this assumption is very weak for continuous $X$, e.g., $x_1,\ldots,x_{k_U}$ can be chosen to be exemplars of each class $i \in [k_U]$.

With A\ref{asm:w_u_lindep}, A\ref{asm:mat_invert} replacing A\ref{asm:k_U}, we extend the identification result from Theorem~\ref{thm:disc_q_y_x} to continuous data. 

\begin{theorem}[Continuous Observations]\label{thm:cont_q_y_x}
Given A\ref{asm:graph}, A\ref{asm:coufounder_shift}, A\ref{asm:inversion}--\ref{asm:mat_invert}, the distribution $q(Y|X=x)$ is identifiable from continuous $\{W,X,C,Y\} \sim P$ and $x \in X \sim Q$.%, and discrete $\tilde{U} \sim P$.
\end{theorem}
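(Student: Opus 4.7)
The plan is to mirror the proof of Theorem~\ref{thm:disc_q_y_x}: first establish a continuous analog of Lemma~\ref{lem:u} that identifies the source distributions involving the latent $\tilde U$, and then apply the label-shift-style inversion in eq.~(\ref{eq:label shift LR}) to recover the latent-shift ratios $q(\tilde U)/p(\tilde U)$. The central new ingredient is a \emph{functional} eigendecomposition that plays the role of $\bA^{-1}\bB$ from eq.~(\ref{eq:AinvB}).

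First I would note that eq.~(\ref{eq:goal}) remains valid in the continuous setting, because $U \in [k_U]$ is still discrete: the sum has $k_U$ terms and the Bayes-rule manipulations only use the existence of conditional densities for $W,X,C,Y$. Hence it suffices to identify, up to the unknown permutation $\tilde U$, the source quantities $p(Y\mid X,\tilde U=i)$ and $p(\tilde U=i\mid X)$ together with the ratios $q(\tilde U=i)/p(\tilde U=i)$ for each $i\in[k_U]$.

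Second, for the analog of Stage~1 of Lemma~\ref{lem:u}, I would lift $\bA,\bB$ to integral operators on square-integrable functions of $W$. Fix $c\in\Domain(C)$ as supplied by A\ref{asm:w_u_lindep}. The d-separation $W\perp (X,C,Y)\mid U$ in Figure~\ref{fig:model}(c) gives the rank-$k_U$ factorizations
\begin{align*}
p(X,W\mid c) &= \sum_{i=1}^{k_U} p(U\!=\!i\mid c)\,p(X\mid U\!=\!i, c)\,p(W\mid U\!=\!i),\\
p(y,X,W\mid c) &= \sum_{i=1}^{k_U} p(y\mid U\!=\!i,c)\,p(U\!=\!i\mid c)\,p(X\mid U\!=\!i,c)\,p(W\mid U\!=\!i).
\end{align*}
Let $\mathcal{A}$ and $\mathcal{B}_y$ denote the integral operators with these kernels. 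By A\ref{asm:w_u_lindep}, both families $\{p(X\mid U\!=\!i,c)\}_i$ and $\{p(W\mid U\!=\!i)\}_i$ are linearly independent, so each operator has rank exactly $k_U$ and shares the same $k_U$-dimensional range and coimage. The associated generalized eigenvalue problem $\mathcal{B}_y f = \lambda\mathcal{A}f$ therefore collapses to a $k_U$-dimensional linear algebra problem whose spectrum equals $\{p(y\mid U\!=\!i, c)\}_i$; A\ref{asm:inversion} provides a $y$ at which these eigenvalues are distinct, so the generalized eigenfunctions are unique up to scaling and a permutation. After normalizing as a density in $W$, these eigenfunctions identify $\{p(W\mid \tilde U\!=\!i)\}_i$.

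Third, with $p(W\mid \tilde U)$ in hand, I would carry out the continuous analog of Stage~2. The factorization $p(W\mid x)=\sum_i p(W\mid\tilde U\!=\!i)\,p(\tilde U\!=\!i\mid x)$, evaluated at $k_U$ values of $W$ chosen so that $[p(w_\ell\mid\tilde U\!=\!i)]_{\ell,i}$ is invertible (available from the linear independence in A\ref{asm:w_u_lindep}), produces a solvable linear system for $\{p(\tilde U\!=\!i\mid x)\}_i$ at each $x$. An analogous argument starting from $p(Y\mid X,W)=\sum_i p(Y\mid X,\tilde U\!=\!i)\,p(\tilde U\!=\!i\mid X,W)$, with Bayes' rule applied to the mixing weights, identifies $p(Y\mid X,\tilde U\!=\!i)$. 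Finally, by A\ref{asm:mat_invert}, evaluating eq.~(\ref{eq:label shift LR}) at the points $x_1,\ldots,x_{k_U}$ yields an invertible $k_U\times k_U$ linear system whose solution is $\{q(\tilde U\!=\!i)/p(\tilde U\!=\!i)\}_i$. Substituting all identified pieces into eq.~(\ref{eq:goal}), which is invariant under relabeling of $U$ because the sum merely marginalizes over it, yields $q(Y\mid X=x)$.

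The main obstacle is the operator-level eigendecomposition: choosing appropriate function spaces for $\mathcal{A}$ and $\mathcal{B}_y$, verifying that the rank-$k_U$ structure reduces the generalized eigenvalue problem to a finite-dimensional one, and showing that the eigenfunctions uniquely (up to permutation) recover $\{p(W\mid U\!=\!i)\}_i$. A\ref{asm:w_u_lindep} is tailored precisely to this step, and A\ref{asm:inversion} supplies the spectral-gap property needed for uniqueness. Once this continuous analog of Lemma~\ref{lem:u} is in place, the remaining inversions and the label-shift-style solve are direct continuous adaptations of the discrete proof of Theorem~\ref{thm:disc_q_y_x}.
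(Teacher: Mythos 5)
Your overall architecture matches the paper's proof exactly: factorize $p(Y,X,W\mid c)$ as a rank-$k_U$ mixture, set up integral operators $\bbA$ (kernel $p(X,W\mid c)$) and $\bbB$ (kernel $p(y,X,W\mid c)$), extract $p(W\mid\tilde U)$ spectrally, then identify $p(\tilde U\mid x)$ and $p(Y\mid x,\tilde U)$ by linear systems built from the linear independence in A\ref{asm:w_u_lindep}, solve for $q(\tilde U)/p(\tilde U)$ via A\ref{asm:mat_invert}, and substitute into eq.~(\ref{eq:goal}). Your treatment of the later stages is sound and essentially identical to the paper's.

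However, there is a genuine gap in your key spectral step. The generalized eigenfunctions $f$ of $\bbB f = \lambda \bbA f$ on the $W$-side are \emph{not} scalar multiples of the densities $\phi_j(W) = p(W\mid \tilde U = j)$, so ``normalizing as a density in $W$'' does not recover them. Writing $\bbA h = \sum_i s_i\,\psi_i(X)\,\langle\phi_i, h\rangle$, the eigenvalue equation for $\lambda = m_j$ forces $\langle\phi_i, f\rangle = 0$ for all $i\neq j$; within the span of $\{\phi_i\}$ this makes $f$ the $j$-th element of the \emph{dual (biorthogonal) basis}, i.e.\ the $j$-th column of $\bphi\,(\Rphi^\top\Rphi)^{-1}$, which is a linear combination of \emph{all} the $\phi_i$ with inverse-Gram-matrix coefficients. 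Unless the $\phi_i$ are mutually orthogonal (which A\ref{asm:w_u_lindep} does not guarantee), rescaling this eigenfunction to integrate to one yields the wrong function. This is precisely the difficulty the paper's proof spends its effort on: it recovers the orthonormalized basis $\overline{\bphi}$ and the triangular factor $\Rphi$ separately from the matrix of eigenfunctions $\overline{\bphi}(\Rphi^\top)^{-1}$, and then inverts to obtain $\bphi = \overline{\bphi}\Rphi$. To close the gap you must either add an analogous inversion of the $k_U\times k_U$ coefficient matrix relating the eigenfunctions to the $\phi_j$, or work on the $X$-side instead: the eigenfunctions of $\bbB\bbA^{-1}$ \emph{are} scalar multiples of $\psi_j(X) = p(X\mid \tilde U = j, c)$ and can legitimately be normalized as densities, after which $p(W\mid\tilde U = j)$ is recovered (up to a normalizing constant) by pairing the dual basis of the now-known $\psi_j$'s with the kernel $p(X,W\mid c)$.
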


We give a full proof in the Appendix. The steps are similar to the discrete observation case: set up a linear system, eigendecompose it, recover $p(W|\tilde{U})$ from the eigenvectors, and use $p(W|\tilde{U})$ to identify all quantities on the right-hand side of eq.~(\ref{eq:goal}). However, the specifics of the continuous setting require more technical tools.

\paragraph{Estimation.} Implementing a plug-in estimator from Theorem~\ref{thm:cont_q_y_x} is challenging, as it requires non-parametric conditional density estimation and an eigendecomposition over functions.
We implement such an approach, and describe it in detail in the Appendix.

\section{Roles of Concepts and Proxies}
\label{sec:uniqueness}

Do we really need $C$ and $W$? And why can't we have additional edges in Figure~\ref{fig:model}(c), e.g. $X \rightarrow Y$? We describe here why the ``concept bottleneck'' and ``proxy'' properties of $W$ and $C$ are essential to our identification strategy. Specifically, we discuss at a high level why generalizing the graph by removing observed nodes or adding edges prevents non-parametric identification of simpler causal quantities. While these are not necessary conditions, they are nearly as general as those used in non-parametric identification results in causal inference literature (\citet{miao2018identifying,lee2021causal} also allow edge $W \rightarrow Y$).

\paragraph{Can $C$ and/or $W$ be removed?} Removing $C$ corresponds to the setting of \citet{pearl2010measurement}, where the goal is to estimate $p(Y|do(X))$. This work assumes one can either: (a) observe $U$ without error in a subpopulation \citep{selen1986adjusting,greenland2008bias}, (b) observe $p(W|U)$ \citep{pearl2010measurement}, or (c) place a prior distribution on the parameters of $p(W|U)$ to bound $p(Y|do(X))$ \citep{greenland2005multiple}. However, these techniques are non-trivial when $U$ is complex. Here we will not assume that it is possible to observe $U, p(W|U)$ or derive a prior for $p(W|U)$. Keeping $C$ but removing $W$ leads to a generalization of the front-door graph \citep{pearl2000models} for which causal effects are not non-parametrically identifiable. If we remove both $C$ and $W$, we can only identify $p(Y|do(X))$ if $U$ is observed, an assumption called `ignorability' \citep{imbens2015causal}.

\begin{figure}[t!]
\centering
\includegraphics[width=0.3\textwidth]{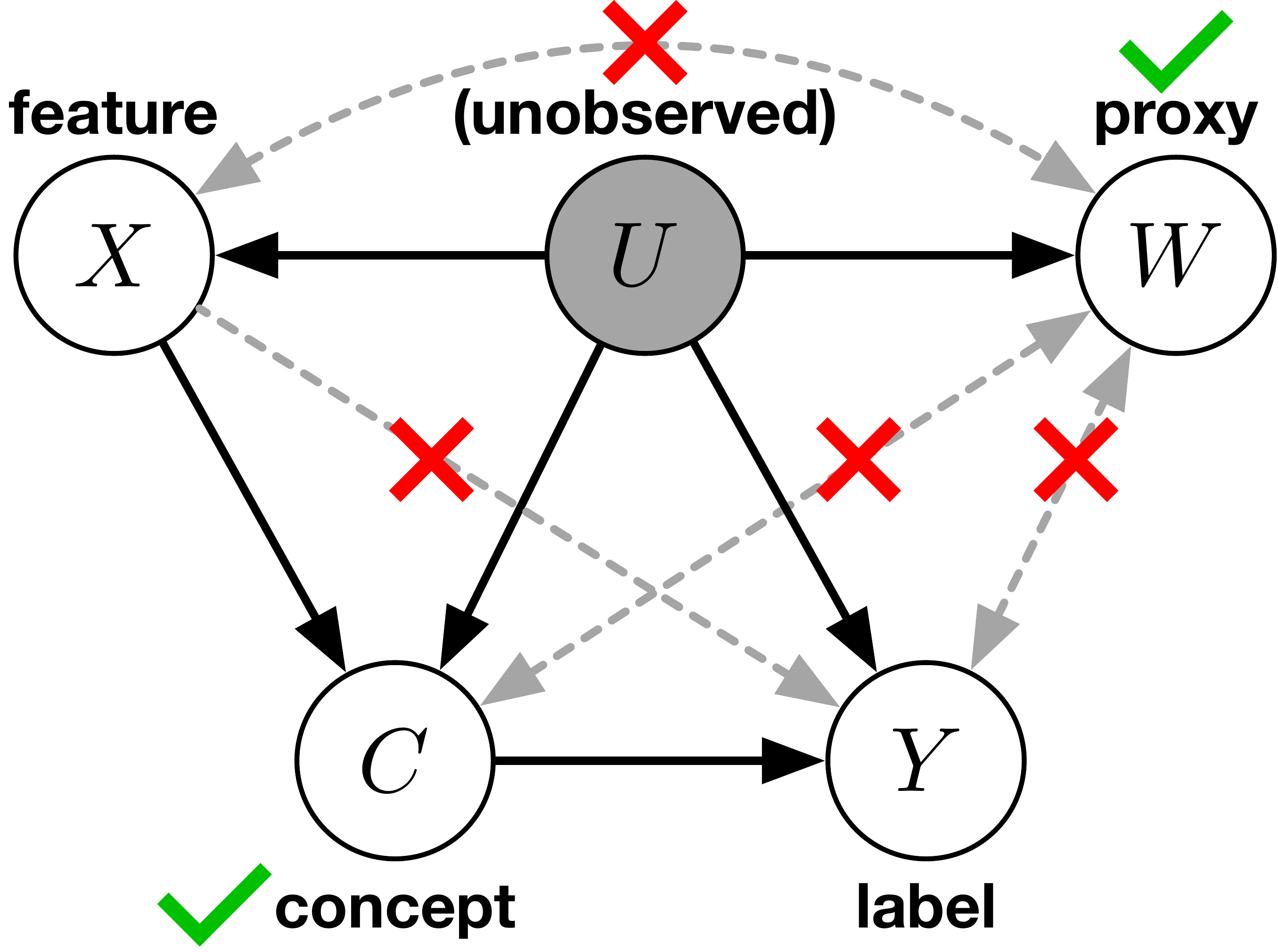}
\caption{Removing $C,W$ or adding any of the dotted edges prevents non-parametric 
identification of the full joint distribution $p(\mathcal V, \tilde{U})$ via our approach.}
\label{fig:non-graphs}
\end{figure}

\paragraph{Can we remove/add any additional edges?}
First note that if we remove edges from our assumed graph this limits the possible data distributions that it could have generated. This is because when edges are removed, conditional independences may be introduced. For example, if we remove the edge from $U \rightarrow C$ then $W \ind C \mid X$, which is not the case for our original graph in Figure~\ref{fig:model} (c). Another way to see this is that we can recover the covariate shift graph of Figure~\ref{fig:model} (a) from ours if we remove all edges starting from $U$, then remove $X \rightarrow C$, and finally relabel $C$ as $X$. Recall that the covariate shift graph implies $p(Y|X)\!=\!q(Y|X)$ which does not hold in our original graph. What about adding edges? Identifying $p(W|U)$ (i.e., Stage 1 in the proof of Lemma~\ref{lem:u}) requires that both $W \ind \{X,C,Y\} \mid U$ and $Y \ind \{W,X\} \mid \{U, C\}$. The first conditional independence is broken if there are any arrows from $X,C,Y$ to or from $W$. We do not prove here that this is necessary, but we suspect that it is: currently the only edge that can be added for identifying the simpler causal quantity $p(Y|do(C))$ is $W \rightarrow Y$ \citep{miao2018identifying,lee2021causal}. Crucially, these methods do not identify $p(W|U)$. The only other edge that could be added to the graph and it still be a DAG is $X \rightarrow Y$. However, this would break the second conditional independence statement as it would make $Y \notind X \mid \{U, C\}$. This edge would also render the causal effect unidentifiable under the most generic non-parametric methods \citep{lee2021causal}.

\section{Estimation with Latent Variable Models}\label{sect:estimation}

Algorithm~\ref{algo:method} and its associated continuous version (described in the Appendix~\ref{sec:appendix:continuous}) become impractical as the dimension increases (due to the need for probability mass/density estimation). Here, we propose an alternative approach based in deep latent variable modelling that can be useful for adapting to latent subgroup shifts with high-dimensional data.
Note that the identification arguments in the previous section imply that any joint distribution $p(\tilde U, C, X, Y, W)$ that satisfies our assumptions and matches the observed marginal distribution $p(C, X, Y, W)$ can be used to identify $q(Y \mid X)$.
We propose approximating such a joint distribution using a  model based on the Wasserstein Auto-Encoder \citep[WAE;][]{tolstikhin2018wasserstein}.
In this section, we describe modifications to the standard WAE to customize the learned joint distribution to our assumptions.

Formally, we approximate the true posterior $p(\tilde{U}|X,C,Y,W)$ with a recognition model or encoder $\hat{p}(\tilde{U}|X,C,Y,W)$ with parameters $\phi$. Given observed variables $\mathcal V = \{X, Y, C, W\}$, reconstruction loss  $\ell$, decoder $f$ with parameters $\theta$, divergence $D$, and prior distribution $\overline{p}(\tilde{U})$, the form of the training objective is
\begin{equation}\label{eq:vae_obj}
    \min_{\phi, \theta} \mathbb{E}_{p(\mathcal V)} \mathbb{E}_{{p} ( \tilde{U} \mid \mathcal V)}\big[\ell(\mathcal V, f(\tilde{U})) \big] + D(\hat{p}(\tilde{U}) \mid \mid \overline{p}(\tilde{U})).
\end{equation}

To encourage the inference network to learn a posterior distribution that conforms to Figure~\ref{fig:model}(c) we impose the following factorization on the joint probability
\begin{align*}
    p(\mathcal V,\tilde{U}) = p(Y|C,\tilde{U})p(C|X,\tilde{U})p(X|\tilde{U})p(W|\tilde{U})p(\tilde{U}).
\end{align*}
Given this, the  reconstruction (log) loss decomposes
\begin{align*}
    \ell(\mathcal V, f(\tilde{U})) &\;=  \beta_Y \ell_Y(Y, f_Y(C, \tilde{U})) + \beta_C \ell_C(C, f_C(X, \tilde{U})) \\ 
     &\quad+ \beta_X \ell_X(X, f_X(\tilde{U})) + \beta_W \ell_W(W, f_W(\tilde{U})).
\end{align*}
where the above subscripts indicate variable-specific decoders, loss functions, and scalar hyperparameter weights $\beta$.
As $\tilde{U}$ is discrete, to allow training with the reparameterization trick we model $\hat{p}(\tilde{U}|X,C,Y,W)$ using a Gumbel-Softmax distribution \citep{jang2016categorical,maddison2016concrete}.
We set the prior $\tilde{p}(\tilde{U})$ to be a uniform categorical distribution over the categories of $\tilde{U}$.

Given a trained WAE model, we can generate joint samples $\{(x_i,c_i,y_i,w_i,\tilde{u}_i)\}_{i=1}^n$ by the encoder $\hat p(\tilde U \mid X,C,Y,W)$.
Lemma~\ref{lem:u}, which establishes identification of this joint distribution under our assumptions, provides some justification for this approach.
All that remains to estimate are $p(\tilde{U}|X), q(\tilde{U})/p(\tilde{U}), p(Y|X, \tilde{U})$ and Equation~\eqref{eq:decomp}.
Each of these is readily estimable using standard classification models, as we have joint samples.
We discuss our implementation of this estimation strategy in the Appendix.
\section{Simulation Study} \label{simulation_study}
We now describe demonstrate our identification results in a simulated numerical examples.
These examples serve as a proof of concept that our identification strategies can serve as the basis for estimation methods.
In particular, we aim to show that (a) plug-in estimators based on our constructive proofs can be used to estimate $q(Y \mid X)$ in simple contexts, and (b) modifying deep latent variable models to respect the conditional independence structure in our setting can be an effective strategy for estimation in more complex settings.
We also show that estimators based on our adjustment strategy can succeed where standard covariate shift and label shift adaptation techniques, or naive applications of latent variable models, fail.

The simulations are structured as follows.
We have one source distribution $P$, and several target distributions $Q$, generated by latent subgroup shifts.
We train several models on the source distribution, some of which use unlabeled examples from Q for adaptation, then measure their performance on the target distribution.
In each case, we compare performance to two endpoints: the performance of an unadapted model trained by ERM on the source (ERM-SOURCE), which should be a lower bound on performance, and an oracle model trained directly on data from the target distribution (ERM-TARGET), which should be an upper bound.
We also compare to an oracle model that adjusts for $U$ using \eqref{eq:goal}, as if it were observed (LSA-ORACLE).

For these simulations, we fix a set of parameters that instantiate a case where standard empirical risk minimization (ERM-SOURCE) fails in a predictable way, while oracle adjustments for $U$ (LSA-ORACLE) recover the optimal target predictor $q(Y \mid X)$.
We do so by constructing a setting where the subgroup specific conditional expectation $E[Y \mid X, U]$ is sufficiently different across subgroups, thus producing a different ordering of predictions over examples from the target $q(Y \mid X)$.
Furthermore, we ensure that neither $U$ nor $Y$ can be perfectly reconstructed from $X$. If either were the case $p(Y \mid X) = p(Y \mid X, U) = q(Y \mid X, U) = q(Y \mid X)$, and the optimal predictor under $Q$ would simply correspond to the optimal predictor under $P$.
We then evaluate several estimation approaches based on our identification strategy (from which $U$ is hidden).

We sample datasets of size 10,000, and divide training, validation, and test sets into 70\%, 20\%, and 10\% splits.
For all experiments, we consider a fixed setting for the source distribution such that $p(U\!=\!1)\!=\!0.1$.
The target distribution varies over a range of settings of $q(U\!=\!1) \!\in\! \{0.1, 0.2, \dots, 0.9\}$.
Further details regarding the experimental procedure are provided in Appendix \ref{sec:appendix:experiments}.

\begin{figure*}[t!]
\centering
\includegraphics[width=0.9\textwidth]{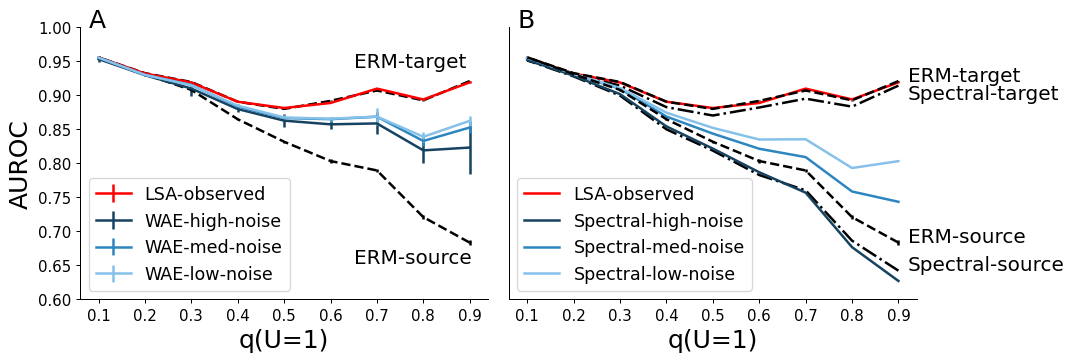}
\vspace{-2ex}
\caption{Simulation study: adaptation over target environments and varying levels of noise in the proxy variable $W$ ($\alpha_w=[1, 2, 3]$ for high, medium, and low noise). Shown is the mean $\pm$ std AUROC evaluated in varying target domains over ten training replicates for models trained in a fixed source domain ($p(U\!=\!1)\!=\!0.1$). {\bf Panel A} compares adaptation with Wassertein Autoencoders with structured decoders (LSA-WAE-S) and {\bf Panel B} compares adaptation with the continuous eigendecomposition (spectral) approach.}
\label{fig:synthetic_sweep}
\end{figure*}

To evaluate the discrete eigendecomposition approach (Algorithm \ref{algo:method}), we first apply K-means with two clusters to discretize X. 
The results in Table \ref{table:generation-properties} verify that the algorithm is capable of improving on estimates derived from the source domain in a setting where the magnitude of the distribution shift is large ($p(U\!=\!1)\!=\!0.1$ vs. $q(U\!=\!1)\!=\!0.9$) and $W$ is a noisy proxy of $U$ ($\alpha_w\!=\!1$).

\begin{table}[t!]
  \begin{center}
    \caption{Results of discrete simulation study ($\alpha_w \!=\! 1$, $n \!=\! 10^4$, $p(U\!=\!1)\!=\!0.1$, $q(U=1)\!=\!0.9$). Results shown are the RMSE between estimated and true $q(Y \mid X)$ across categories of discretized $X$.}
    \vspace{-1ex}
\begin{tabular}{lrr}
    \toprule  
        & RMSE  \\
  \midrule
 $p(Y|X)$ & $0.194$ \\
 ours & $\mathbf{0.056}$ \\
 \midrule
 $q(Y | X)$ & $0.004$ \\
   \bottomrule
\end{tabular}
\label{table:generation-properties}
\end{center}
\end{table}

For the case where $X$ is continuous, we compare the proposed adaptation approach to alternatives.
In the main text, we primarily evaluate performance using the area under the ROC curve (AUROC), but include analagous results in the appendix for the cross-entropy loss and accuracy (Supplementary Tables \ref{tab:supp:continuous:cross-entropy} and \ref{tab:supp:continuous:accuracy}).
In a setting analogous to the experiment conducted in the discrete case (Table \ref{table:continuous_synthetic}; $p(U\!=\!1)\!=\!0.1$, $q(U\!=\!1)\!=\!0.9$, $\alpha_w\!=\!1$)), models learned with ERM on the source domain (ERM-SOURCE) using a multilayer perceptron perform poorly in the target domain relative to those learned in the target domain (ERM-TARGET).
Furthermore, standard approaches to accounting for distribution shift, including covariate shift weighting (COVAR; \cite{shimodaira2000improving}), label shift weighting (LABEL; weighting by oracle $q(Y)/p(Y)$), and black box shift estimation (BBSE; \cite{lipton2018detecting}) do not outperform ERM-SOURCE.
However, we note that the latent shift adaptation approach with oracle access to $U$ (LSA-ORACLE; \eqref{eq:goal}) is able to perform on-par with ERM-TARGET without access to labeled data in the target domain.
Our main WAE-based approach that leverages the structured decoder and reconstruction loss (LSA-WAE-S) does not match LSA-ORACLE, but does partially mitigate the gap in performance between ERM-SOURCE and ERM-TARGET.
We compare to an alternative WAE specification that does not leverage a structured decoder (LSA-WAE-V) and find that it is does not improve on ERM-SOURCE.
This highlights the key role played by that the structural properties of the auxiliary variables $C$ and $W$.

We further evaluate the proposed WAE approach over varying degrees of distribution shift and levels of noise in the proxy variable $W$, and compare it to the continuous eigendecomposition method (appendix \ref{sec:appendix:continuous}) suggested by the proof of Theorem~\ref{thm:cont_q_y_x}.
We observe that ERM-SOURCE performance degrades smoothly as a function of the degree of distribution shift (Figure \ref{fig:synthetic_sweep}).
Both the WAE-based adaptation approach and the continuous eigendecomposition approach are capable of mitigating the performance degradation when the level of noise in $W$ is low ($\alpha_w\in \{2, 3\}$).
Overall, the WAE approach outperforms the continuous eigendecomposition approach and is less sensitive noise in $W$. In the high-noise setting ($\alpha_w=1$), the eigendecomposition approach is worse than the ERM-source but has similar performance to the eigendecomposition method without adaptation (the dashed line spectral-source in Figure~\ref{fig:synthetic_sweep}(b)).  

\begin{table}[t!]
    \centering
    \caption{Results of continuous simulation study ($\alpha_w\!=\!1$, $n \!=\! 10^4$, $p(U\!=\!1)\!=\!0.1$, $q(U\!=\!1)\!=\!0.9$), mean $\pm$ std AUROC over 10 random training replicates.}
    \begin{tabular}{lrr}
    \toprule
    Method         &    Source              &           Target       \\
    \midrule
    ERM-SOURCE     &  0.9560 $\pm$ 0.0001 &  0.6856 $\pm$ 0.0010 \\
    COVAR          &  0.9113 $\pm$ 0.0216 &  0.3274 $\pm$ 0.1351 \\
    LABEL          &  0.9561 $\pm$ 0.0001 &  0.6848 $\pm$ 0.0014 \\
    BBSE           &  0.9550 $\pm$ 0.0001 &  0.6789 $\pm$ 0.0005 \\
    LSA-WAE-S  &  0.9429 $\pm$ 0.0083 &  \textbf{0.8131 $\pm$ 0.0365} \\
    LSA-WAE-V      &  0.9550 $\pm$ 0.0006 &  0.6730 $\pm$ 0.0138 \\
    \midrule
    LSA-ORACLE     &  0.7843 $\pm$ 0.0254 &  0.9167 $\pm$ 0.0012 \\
    ERM-TARGET     &  0.7611 $\pm$ 0.0011 &  0.9194 $\pm$ 0.0001 \\
    \bottomrule
    \end{tabular}
    \label{table:continuous_synthetic}
\end{table}

\section{Discussion}

We presented a strategy for unsupervised domain adaptation under latent subgroup shift, which generalizes the standard settings of covariate and label shift.
Our strategy leverages auxiliary data in the source domain (concepts $C$ and a proxy $W$), and generalizes identification results from the causal inference literature to derive an identification strategy for the optimal predictor $q(Y | X)$ under the target distribution.
Our identification results are amenable to deep latent variable modeling, and suggest constraints that can be imposed on these models to make them effective for domain adaptation under this particular shift.
We demonstrated these claims in a carefully designed numerical example.

\paragraph{Limitations and future work} 
While a latent variable model has been shown promising to estimate the quantities of interest, such models are tricky to tune in practice, and have many known failure modes when used in causal contexts (see, e.g., \citet{rissanen2021critical}, who critique the method proposed in \citet{louizos2017causal}).
The identification arguments and corresponding modifications we make to the latent variable model may address some of these concerns, but practical challenges still remain.
For example, in practice, we observed that the dimensionality of the latent space mattered (the higher, the better) and that multiple preprocessing and training choices influenced the fit of the model (see Appendix for details).

Our approach requires the availability of mediating concepts $C$ and of a proxy variable $W$ at training time. This information might not be readily available, or it may not satisfy all the assumptions (e.g. $C$ such that $p(Y|C,U,X) = p(Y|C,U)$). Furthermore, these assumptions are typically not testable as $U$ is not observed.
However, we hope that our identification results can serve as motivation for careful collection of richer data, in which concepts and proxies may be present by design.

It is also worth deriving estimation guarantees (i.e., consistency guarantees, error bounds) for estimators of $q(Y|X)$. This would help understand if further data in $Q$ could improve estimation. For example, if we also observed $C$ in $Q$ would this more tightly bound the error of $q(Y|X)$?

We study the case where $U$ is discrete and other variables $\{W,Y,X,C\}$ can be either discrete or continuous. It is interesting to study the identification in the case that $U$ is continuous. In addition, our identification results require additional assumptions on the distribution, i.e., A\ref{asm:inversion}--A\ref{asm:mat_invert}, potentially limiting the class of distributions. These assumptions arise from the eigendecomposition technique used to show the  identification. It would interesting to understand whether these assumptions can be relaxed, perhaps incorporating results from proximal causal inference and missing data methods that do not need to identify the full joint distribution of observables and latent variables \citep[see, e.g.,][]{tchetgen2020introduction,kallus2021causal,li2021nonparametric}.

% \paragraph{Author contribution}

\paragraph{Acknowledgements}
We would like to thank Victor Veitch and Alexander Brown for valuable discussions and feedback. This work was funded by Google and supported by the Gatsby charitable foundation.

\bibliographystyle{unsrtnat}
\bibliography{references}

%%%%%%%%%%%%%%%%%%%%%%%%%%%%%%%%%%%%%%%%%%%%%%%%%%%%%%%%%%%%%%%%%%%%%%%%%%%%%%%
%%%%%%%%%%%%%%%%%%%%%%%%%%%%%%%%%%%%%%%%%%%%%%%%%%%%%%%%%%%%%%%%%%%%%%%%%%%%%%%
% APPENDIX
%%%%%%%%%%%%%%%%%%%%%%%%%%%%%%%%%%%%%%%%%%%%%%%%%%%%%%%%%%%%%%%%%%%%%%%%%%%%%%%
%%%%%%%%%%%%%%%%%%%%%%%%%%%%%%%%%%%%%%%%%%%%%%%%%%%%%%%%%%%%%%%%%%%%%%%%%%%%%%%

\newpage
\appendix
\onecolumn
\section*{Appendix}

\section{Proofs}

\subsection{Proof of Lemma~\ref{lem:u}}
Recall Lemma 1:

\textbf{Lemma 1.} \emph{Given that the above assumptions hold, all probability mass functions over discrete $\{W,X,C,Y,\tilde{U}\}$ in the source $P$ are identifiable, where $\tilde{U}$ is an unknown sorting of $U$.}

Before we prove this we will prove a variant of Theorem 1 of \citet{kuroki2014measurement}.

\begin{lemma}[variant of Theorem 1 of \citet{kuroki2014measurement}] \label{lemma:u_id}
Given A\ref{asm:graph}-A\ref{asm:inversion},  $p(W|\tilde{U})$ is identifiable.
\end{lemma}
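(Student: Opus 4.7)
The plan is to identify $p(W\mid \tilde U)$ via an eigendecomposition whose eigenvectors are (up to scaling and permutation) the columns of $p(\bW\mid \bU)$, following the spectral strategy of \citet{kuroki2014measurement} but adapted to the graph in Figure~\ref{fig:model}(c) and to the slightly more general rank assumption A\ref{asm:inversion}.

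First I would extract the two independence statements that the argument relies on: from Figure~\ref{fig:model}(c) we obtain (i) $W\ind\{X,C,Y\}\mid U$, because $U$ is the only parent of $W$ and every path out of $W$ begins at $U$; and (ii) $Y\ind\{W,X\}\mid\{U,C\}$, because the parents of $Y$ are $\{U,C\}$ and conditioning on them d-separates $Y$ from the non-descendants $W,X$. Using these, for any fixed $c\in\Domain(C)$ and $y\in\Domain(Y)$ the observables factor as
\begin{align*}
p(X=i,W=j\mid c) &= \sum_{u=1}^{k_U} p(X=i\mid c,U=u)\,p(W=j\mid U=u)\,p(U=u\mid c), \\
p(Y=y,X=i,W=j\mid c) &= \sum_{u=1}^{k_U} p(Y=y\mid c,U=u)\,p(X=i\mid c,U=u)\,p(W=j\mid U=u)\,p(U=u\mid c).
\end{align*}
In matrix form these read $\bM_{XW\mid c} = \bM_{X\mid c,\bU}\,\mathbf{D}_{\bU\mid c}\,\bM_{W\mid\bU}^{\top}$ and $\bM_{yXW\mid c} = \bM_{X\mid c,\bU}\,\mathbf{D}_{Y=y\mid c,\bU}\,\mathbf{D}_{\bU\mid c}\,\bM_{W\mid\bU}^{\top}$, where the two $\mathbf{D}$'s are diagonal with $u$-th entry $p(U=u\mid c)$ and $p(Y=y\mid c,U=u)$ respectively.

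Next I would show that the matrices $\bA$ and $\bB$ defined in \eqref{eq:AinvB} are precisely augmentations of $\bM_{XW\mid c}$ and $\bM_{yXW\mid c}$ by a ``trivial'' $X$-state and $W$-state (interpreting the leading $1$ and the marginal entries as the contribution of a dummy value with conditional probability~$1$), so the same factorization persists with the same right factor $\bM_{W\mid\bU}^{\top}$. Using A\ref{asm:k_U} to select $k_U$ independent rows and the rank hypothesis in A\ref{asm:inversion} to guarantee that the relevant factors have full column rank $k_U$, the product $\bA^{-1}\bB$ (or a rank-$k_U$ restriction thereof) collapses to
\begin{equation*}
\bA^{-1}\bB \;=\; \bM_{W\mid\bU}^{-\top}\,\mathbf{D}_{Y=y\mid c,\bU}\,\bM_{W\mid\bU}^{\top},
\end{equation*}
i.e.\ a similarity transform of the diagonal matrix $\mathbf{D}_{Y=y\mid c,\bU}$. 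Hence the columns of $\bM_{W\mid\bU}^{-\top}$ are right eigenvectors of $\bA^{-1}\bB$, with eigenvalues $\{p(Y=y\mid c,U=u)\}_{u=1}^{k_U}$.

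Finally I would discharge the two uniqueness obligations for the eigendecomposition. The assumption in A\ref{asm:inversion} that $p(Y\mid C,U=1),\dots,p(Y\mid C,U=k_U)$ are distinct $P$-a.s.\ lets us pick $(y,c)$ for which the eigenvalues are pairwise distinct, so each eigenspace is one-dimensional and each eigenvector is unique up to a nonzero scalar. The scalar ambiguity is then eliminated by the probability constraint that every column of $\bM_{W\mid\bU}$ sums to~$1$: dividing each recovered eigenvector by its sum (and inverting/transposing) returns a bona fide stochastic matrix equal to $\bM_{W\mid\tilde\bU}$ for some permutation $\tilde\bU$ of $\bU$. The remaining ambiguity is exactly the labelling of eigenvalues, which corresponds to the permutation $\tilde U$ in the statement. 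The main obstacle is keeping the non-square case honest—$\bA$ need not be square under A\ref{asm:k_U} alone—which I would handle by restricting to a $k_U\times k_U$ sub-block whose invertibility is guaranteed by the rank-$k_U$ portion of A\ref{asm:inversion}, and arguing that the recovered $p(\bW\mid\tilde\bU)$ does not depend on the choice of sub-block because the eigenvalues and their eigenvectors are pinned down globally.
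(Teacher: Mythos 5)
Your proposal follows essentially the same route as the paper's proof: use $W\ind\{X,C,Y\}\mid U$ and $Y\ind\{W,X\}\mid\{U,C\}$ to factorize $p(X,W\mid c)$ and $p(Y,X,W\mid c)$ through $U$, observe that $\bA^{-1}\bB$ (the paper uses the pseudoinverse $\bA^{\dagger}\bB$ and coarsens $W$ to make the relevant factor square, where you restrict to a full-rank sub-block) is a similarity transform of $\mathrm{diag}\bigl(p(Y=y\mid c,U=u)\bigr)_{u}$, and recover $p(W\mid\tilde U)$ from the eigenvectors, with the distinctness condition in A\ref{asm:inversion} ensuring the eigenspaces are one-dimensional and the residual ambiguity is exactly a permutation of $U$.

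One detail as written does not go through. In the augmented system you describe (where the leading row and column of $\bA,\bB$ encode a dummy state of probability one), the vector attached to subgroup $u$ is $\bigl(1,\,p(W=1\mid u),\dots,p(W=k_W-1\mid u)\bigr)$, whose entries sum to $2-p(W=k_W\mid u)$ --- an unknown quantity --- so dividing the recovered vector by its sum does not return the stochastic vector you want. The scale has to be pinned instead by the coordinate whose value is known exactly, namely the dummy coordinate equal to $1$; this is what the paper does by setting $\alpha_i=1/h_{i1}$ after inverting the eigenvector matrix. Your sum-to-one normalization would only be valid if you replaced the marginal/dummy row and column by the full set of $k_W$ states of $W$. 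Relatedly, mind the order of operations: the eigenvectors of $\bA^{-1}\bB$ are (scalar multiples of) the columns of $\bS^{-1}$, so the normalization must be applied to the rows of the \emph{inverse} of the eigenvector matrix, not to the eigenvectors themselves; normalizing first and inverting afterwards leaves the row-scale ambiguity intact. With that correction the argument matches the paper's.
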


\begin{proof}
First, fix a $k_U$. Without any additional information the easiest is to set $k_U \!=\! k_W$. However, if you believe that $k_U < k_W$, coarsen $W$ by dropping categories to ensure that the new dimensionality $k'_W$ is equal to $k_U$. Next notice that, given A\ref{asm:graph} (Figure~\ref{fig:model} (c)) we can factorize the joint of $W,X,Y$ conditional on $C$ as:
\begin{align*}
    p(Y, X, W \mid C) = \sum_{k=1}^{k_U} p(Y \mid C, U = k) p(X \mid C, U = k) p(W \mid  U = k) p(U = k \mid C).
\end{align*}
Next, construct the following matrices based on the decomposition of $p(Y, X, W | C)$ and of its marginal distributions:
\begin{align*}
    \bA &\;:=
    \begin{bmatrix}
        1 & p(W = 1 | C) & \cdots & p(W = k_W-1 |C) \\
        p(X = 1 | C) & p(X=1, W=1 | C) &  \cdots & p(X=1, W = k_W-1 | C) \\
        \vdots & \vdots & \ddots & \vdots \\
        p(X = k_X-1 | C) & p(X = k_X-1, W = 1 | C), & \cdots & p(X = k_X-1, W = k_W-1 | C)
    \end{bmatrix} \\
    \bB &\;:=
    \begin{bmatrix}
        p(Y | C) & p(Y, W=1 | C) & \cdots & p(Y, W = k_W-1 | C) \\
        p(Y, X=1 | C) & p(Y, X=1, W=1 | C) & \cdots & p(Y, X=1, W = k_W-1 | C) \\
        \vdots & \vdots & \ddots & \vdots \\
        p(Y, X = k_X-1 | C) & p(Y, X = k_X-1, W=1 | C) & \cdots & p(Y, X = k_X-1, W = k_W-1 | C)
    \end{bmatrix} \\
    \bR &\;:=
    \begin{bmatrix}
        1 & p(X=1 | C, U=1) & \cdots & p(X = k_X-1 | C, U=1) \\
        \vdots & \vdots & \ddots & \vdots \\
        1 & p(X=1 | C, U=k_U) & \cdots & p(X = k_X-1 | C, U=k_U)
    \end{bmatrix} \\
    \bM &\;:=
    \begin{bmatrix}
        p(U=1 | C) & 0 & \cdots & 0 \\
        & \ddots  &  &  \\
        0 &  \cdots & 0 & p(U = k_U | C)
    \end{bmatrix} \\
    \Lambda &\;:=
     \begin{bmatrix}
        p(Y | C, U=1 ) & 0 & \cdots & 0 \\
        & \ddots  &  &  \\
        0 &  \cdots & 0 & p(Y | C, U = k_U)
    \end{bmatrix} \\
    \bS &\;:=
    \begin{bmatrix}
        1 & p(W=1 | U=1) & \cdots & p(W = k_W-1 | U=1) \\
        \vdots & \vdots & \ddots & \vdots \\
        1 & p(W=1 | U=k_U) & \cdots & p(W = k_W-1 | U=k_U)
    \end{bmatrix}.
\end{align*}

Then note that
\begin{align}
    \bA = \bR^\top \bM \bS \;\;\;\;\;\;\;\;\;\;
    \bB = \bR^\top \bM \Lambda \bS. \label{eq:A_B_decomposition}
\end{align}

We then have that,

\begin{align}
\bA^{\dagger}\bB & =\left[\left(\bA^{\top}\bA\right)^{-1}\bA^{\top}\right]\bR^{\top}\bM\Lambda \bS \label{eq:decomp} \\
 & =\left(\bS^{\top}\bM\bR\bR^{\top}\bM\bS\right)^{-1}\bS^{\top}\bM\bR\left(\bR^{\top}\bM\Lambda \bS\right)\nonumber \\
 & \underset{(a)}{=}(\bS)^{-1}\Lambda \bS, \nonumber
\end{align}
where $\bA^{\dagger}$ is the Moore-Penrose pseudoinverse of $\bA$ (recall all pseudoinverses are unique and exist). Recall that above we have ensured that the dimensionality of $W$ is equal to the dimensional $U$. Thus, $\bS$ is square. Further, both $\bS$ and $\bR$ have rank at least $k_U$ by A\ref{asm:inversion}. So $\bS$ and $\bR\bR^{\top}$ are invertible. Because we have to marginalize $U$ in order to obtain observed distributions, it is only possible to identify $U$ up to an arbitrary permutation. Specifically, let  $\tilde{U}$ be a sorting of $U$ such that $p(Y|C,\tilde{U}=1) > p(Y|C,\tilde{U}=2) > \cdots > p(Y|C,\tilde{U}=k_U)$.

Now we need to show that we can obtain $p(W|\tilde{U})$ from eigendecomposition of $\bA^{\dagger}\bB$. To do so we first must solve $|\bA^{\dagger}\bB - \lambda \mathbf{I}| = 0$ for $\lambda$, to obtain the eigenvalues of $\bA^{\dagger}\bB$. Note that $|\bA^{\dagger}\bB - \lambda \mathbf{I}| = |(\bS)^{-1}\Lambda \bS - \lambda \mathbf{I}| = |\Lambda - \lambda \mathbf{I}| = 0$ where the second-to-last equality uses the Weinstein–Aronszajn identity $|(\bS)^{-1}\Lambda \bS - \lambda\mathbf{I}| =  |\bS(\bS)^{-1}\Lambda - \lambda\mathbf{I}| = |\Lambda - \lambda \mathbf{I}|$. Therefore, if we define $\lambda_1 > \cdots \lambda_{k_U}$ as the eigenvalues of $\bA^{\dagger}\bB$, it must be that $\lambda_i = p(Y | C, \tilde{U} = i)$ for $i = 1, \ldots, k_U$. 

Now that we have identified $p(Y | C, \tilde{U})$ we will show we can obtain $p(W | \tilde{U})$ from $\lambda_i$ and the eigenvectors $\eta_i$ of $\bA^{\dagger}\bB$. Define the matrix of eigenvectors as $\bH = [\eta_1, \ldots, \eta_{k_U}]$. To obtain this we must solve the linear system $\bA^{\dagger}\bB\bH = \bH \Lambda$. Note that $\bH$ is determined up to a multiplicative constant as $\lambda_1 \neq \cdots \neq \lambda_{k_U}$ from A\ref{asm:inversion}. Define a matrix of non-zero multiplicative constants $\bE = \mbox{diag}(\alpha_1, \ldots, \alpha_{k_U})$ and the shifted matrix $\bF = \bS^{-1}\bE$. Note that $\bA^{\dagger}\bB\bF = \bS^{-1}\Lambda \bS \bS^{-1} \bE = \bS^{-1}\Lambda\bE = \bS^{-1}\bE\Lambda = \bF \Lambda$. Therefore, $\bF$ is also a matrix of eigenvectors of $\bA^{\dagger}\bB$, and that $\bF = \bS^{-1}\bE = \bH$ for certain values of $\alpha_1, \ldots, \alpha_{k_U}$. To recover these, note that,
\begin{align*}
    \bS =
    \begin{bmatrix}
        1 & p(W=1 | U=1) & \cdots & p(W = k_W-1 | U=1) \\
        \vdots & \vdots & \ddots & \vdots \\
        1 & p(W=1 | U=k_U) & \cdots & p(W = k_W-1 | U=k_U)
    \end{bmatrix}
    = \bE\bH^{-1} =
    \begin{bmatrix}
        \alpha_1 h_{11} & \cdots & \alpha_1 h_{1 k_U} \\
        \vdots  & \ddots & \vdots \\
        \alpha_{k_U} h_{k_U 1} & \cdots & \alpha_{k_U} h_{k_U k_U}
    \end{bmatrix}.
\end{align*}
Equating the first column of both sides of the equation we have that $\alpha_1 = 1/h_{11}, \ldots, \alpha_{k_U} = 1/h_{k_U 1}$. This means that $\bS$ is identifiable from $\bE\bH^{-1}$ as $\bH^{-1}$ is what we estimate from eigendecomposition of $\bA^{\dagger}\bB$. Therefore, every element of $p(W | \tilde{U})$ is identifiable.
\end{proof}

Now that we have obtained $p(W|\tilde{U})$, we can prove Lemma~\ref{lem:u}.

\begin{proof}
As distributions that only involve $\{W,X,C,Y\}$ are observable, all we need to prove is that we can identify all distributions involving $\tilde{U}$. Let $\mathcal{V} \subseteq \{W,X,C,Y\}$ and $\mathcal{V}' \subseteq \{W,X,C,Y\} \setminus \mathcal{V}$. All we need to identify are 

\begin{enumerate}[label=(\alph*)]
\item $p(\tilde{U})$;
\item $p(\mathcal{V} \mid \tilde{U})$;
\item  $p(\tilde{U} \mid \mathcal{V})$;
\item $p(\mathcal{V} \mid \tilde{U}, \mathcal{V}')$.
\end{enumerate}

 Note that proving above identities are sufficient because (e) $p(\tilde{U}, \mathcal{V} \mid \mathcal{V}') = p(\mathcal{V} \mid \tilde{U}, \mathcal{V}')p(\tilde{U} \mid \mathcal{V}')$ (given by (d) and (c)). 

\paragraph{Identifying (a) $p(\tilde{U})$.} The identification is straightforward: note that $p(\bUt) = p(\bW|\bUt)^{\dagger}p(\bW)$.

\paragraph{Identifying (b) $p(\mathcal{V} \mid \tilde{U})$.} Recall we have already identified $p(\bW | \bUt)$. zlet $\mathcal{V}_{\setminus W} = \mathcal{V} \setminus W$. Note that $p(\boldsymbol{\mathcal{V}_{\setminus W}}, \bW | \bUt) =p(\boldsymbol{\mathcal{V}_{\setminus W}} | \bUt) p( \bW | \bUt)$ because $\mathcal{V}_{\setminus W} \ind W \mid \tilde{U}$. Hence, we have $$p(\boldsymbol{\mathcal{V}_{\setminus W}} \mid \bW) = p(\boldsymbol{\mathcal{V}_{\setminus W}} \mid \bUt)p(\bUt \mid \bW).$$
By multiplying $p(\bUt \mid \bW)^\dagger$ on both side, we can obtain
\begin{align*} 
p(\boldsymbol{\mathcal{V}_{\setminus W}} \mid \bUt) = p(\boldsymbol{\mathcal{V}_{\setminus W}} \mid \bW)p(\bUt \mid \bW)^{\dagger}
\end{align*}  Note that this is identified because the first term on the right-hand side is observed and the second term can be identified via Bayes rule $p(\tilde{U} | W) = p(W|\tilde{U})p(\tilde{U})/p(W)$, where $p(W|\tilde{U})$ is identifiable as shown in Lemma~\ref{lemma:u_id}.

\paragraph{Identifying (c) $p(\tilde{U} \mid \mathcal{V})$.} We have identified $p(\bUt | \bW)$ in the previous step using Bayes rule. We then have that $p(\bW | \boldsymbol{\mathcal{V}_{\setminus W}}) = p(\bW | \bUt)p(\bUt | \boldsymbol{\mathcal{V}_{\setminus W}}) \Rightarrow p(\bUt | \boldsymbol{\mathcal{V}_{\setminus W}}) = p(\bW | \bUt)^{\dagger}p(\bW | \boldsymbol{\mathcal{V}_{\setminus W}})$, which is identifiable. 
Finally we have via Bayes rule $p(\tilde{U} | \mathcal{V}_{\setminus W}, W) = p( \mathcal{V}_{\setminus W}, W | \tilde{U})p(\tilde{U}) / p(\mathcal{V}_{\setminus W}, W)$ all of which we can identify (via (a) and (b)).

\paragraph{Identifying (d) $p(\mathcal{V} \mid \tilde{U}, \mathcal{V}')$.} Note that $p(\boldsymbol{\mathcal{V}_{\setminus W}} | \mathcal{V}_{\setminus W}', \bW) = p(\boldsymbol{\mathcal{V}_{\setminus W}} | \bUt, \mathcal{V}_{\setminus W}')p(\bUt|\mathcal{V}_{\setminus W}', \bW) $, which implies that 
$$
p(\boldsymbol{\mathcal{V}_{\setminus W}} | \bUt, \mathcal{V}_{\setminus W}') = p(\boldsymbol{\mathcal{V}_{\setminus W}} | \mathcal{V}_{\setminus W}', \bW) p(\bUt|\mathcal{V}_{\setminus W}', \bW)^{\dagger}.
$$ 
The first term on the right-hand side is observed and the second is identified via (c). 
% can simplify below
Finally note that $p({\mathcal{V}_{\setminus W}}, W | \tilde{{U}}, \mathcal{V}_{\setminus W}') = p(W | {\mathcal{V}_{\setminus W}}, \tilde{{U}}, \mathcal{V}_{\setminus W}')p({\mathcal{V}_{\setminus W}} | \tilde{{U}}, \mathcal{V}_{\setminus W}') = p(W | \tilde{{U}})p({\mathcal{V}_{\setminus W}} | \tilde{{U}}, \mathcal{V}_{\setminus W}')$ (as $W \ind \mathcal{V}_{\setminus W} | \tilde{U}$), where all right-hand terms are identified. Also that $p({\mathcal{V}_{\setminus W}}, | W, \tilde{{U}}, \mathcal{V}_{\setminus W}') = p({\mathcal{V}_{\setminus W}}, | \tilde{{U}}, \mathcal{V}_{\setminus W}')$ which is identified.
\end{proof}

\subsection{Proof of Theorem~\ref{thm:disc_q_y_x}}

\textbf{Theorem 1} (Discrete Observations). \emph{The distribution $q(Y|X)$ is identifiable from discrete $\{W,X,C,Y,\tilde{U}\} \sim P$ and $X \sim Q$.}

\begin{proof}
The first observation is that we can replace $U$ with $\tilde{U}$ everywhere. This is because $\tilde{U}$ has the exact same conditional independences as $U$ that are required in the factorization of $q(Y|X)$ in eq.~(\ref{eq:goal}) (as there are no requirements on the ordering of the categories of $U$). Further, we can replace $U$ with $\tilde{U}$ in eq.~(\ref{eq:goal}) without changing anything, i.e.,
\begin{align}\label{eq:q_x:factor}
    q(Y|X) \propto \sum_{i=1}^{k_U} p(Y|X,\tilde{U}=i)p(\tilde{U}=i|X)\frac{q(\tilde{U}=i)}{p(\tilde{U}=i)}.
\end{align}
This is because we are summing over categories of $U$ and so it makes no difference to change the order of categories of $U$, as in $\tilde{U}$. The only remaining thing to show is that $\frac{q(\tilde{U}=i)}{p(\tilde{U}=i)}$ can be identified. Note that
\begin{align*}
    \frac{q(X)}{p(X)} = \sum_{k=1}^{k_U} p(\tilde{U}=i | X) \frac{q(\tilde{U}=i)}{p(\tilde{U}=i)}.
\end{align*}
Define the vector $\bv_X = [q(X=1)/p(X=1), \ldots, q(X=k_X)/p(X=k_X)]$, the matrix $\bN_{ij} = p(\tilde{U}=i | X=j)$, and the vector $\bv_U = [q(U=1)/p(U=1), \ldots, q(U=k_X)/p(U=k_U)]$. We have that $\bv_U = \bN^{\dagger}\bv_X$. Note that $\frac{q(\tilde{U}=i)}{p(\tilde{U}=i)}$ is identified because $\bN, \bv_X$ are identified, and $\bN^{\dagger} = (\bN^\top \bN)^{-1}\bN^\top$  because $k_X \geq k_U$ by A\ref{asm:k_U} and (b) all linear systems have rank at least $k_U$ by A\ref{asm:inversion}.
\end{proof}

\subsection{Proof of Theorem~\ref{thm:cont_q_y_x}}
We first restate Theorem~\ref{thm:cont_q_y_x}:

\textbf{Theorem}~\ref{thm:cont_q_y_x} (Continuous Observations). \emph{The distribution $q(Y|X)$ is identifiable from continuous $\{W,X,C,Y\} \sim P$ and $X \sim Q$, and discrete $\tilde{U} \sim P$.}

\begin{proof}
The proof steps is similar to the proof of Theorem~\ref{thm:disc_q_y_x}: we can factorize the probability as~\eqref{eq:q_x:factor}. We identify each component as follows.

\paragraph{Identifying $p(W\mid \tilde{U})$.} We first show the continuous version of Lemma~\ref{lemma:u_id}.
As in the discrete case, given A\ref{asm:graph} we can factorize $p(Y,X,W|C)$ as written above. We rewrite it here in order to define functions $\bpsi_i(X),\bphi_i(W)$ and quantities $s_{i},m_{i}$ as follows,
\begin{align*}
    p(Y, X, W \mid C) = \sum_{k=1}^{k_U} \overbrace{p(Y \mid C, U = k)}^{m_{i}} \overbrace{p(X \mid C, U = i)}^{\bpsi_i(X)} \overbrace{p(W \mid  U = i)}^{\bphi_i(W)} \overbrace{p(U = i \mid C)}^{s_{i}}.
\end{align*}

To construct the integral operators for $\bbA,\bbB$ let $\mathcal{W},\mathcal{X}$ be the domains of $X,W$, respectively. Let $L_2(\mathcal{W},\mu)$ be the space of $L_2$-integrable functions on $\mathcal{W}$ with Lebesgue measure $\mu$ (and similarly for $\mathcal{X}$). Let $\bbA: L_2(\mathcal{W},\mu) \rightarrow L_2(\mathcal{X},\mu)$ and $\bbB: L_2(\mathcal{W},\mu) \rightarrow L_2(\mathcal{X},\mu)$ be the integral operators associated with kernel functions $p(X,W|C)$ and $p(Y,X,W|C)$, respectively. They are defined as 
\begin{align*}
    \bbA := \sum_{i=1}^{k_U} s_i \psi_i(X) \otimes \phi_i(W) \;\;\;\;\;\;\;\; \bbB := \sum_{i=1}^{k_U} s_i m_i \psi_i(X) \otimes \phi_i(W)
\end{align*} Note that these operators operate on any function $h \in L_2(\mathcal{W},\mu)$ in the following way, e.g., for $\bbA$,
\begin{align*}
    \bbA h = \sum_{i=1}^{k_U} s_i \psi_i(X) \langle \phi_i(W), h \rangle, \;\;\;\;\; s.t., \;\;\;\;\; \langle \phi_i(W), h \rangle := \int \phi_i(W)h(W) dW.
\end{align*}

Next we will describe how we can identify functions $p(Y|C,\tilde{U})$ and $p(W|\tilde{U})$ from eigendecomposition of the operator $\bbA^{\dagger} \bbB$. We begin by collecting functions into vectors/matrices that will make up this decomposition. Define the row vectors of functions 
\begin{align*}
\bpsi &:= [\psi_1(X), \ldots, \psi_{k_U}(X)];\\
\bphi &:= [\phi_1(W), \ldots, \phi_{k_U}(W)].
\end{align*}
 To fix the scale of the decomposition, we will apply the Gram–Schmidt process to $\bpsi,\bphi$ to create the set of orthonormal functions 
 \begin{align*}
    \overline{\bpsi} &:= [\overline{\psi_1}(X), \ldots, \overline{\psi_{k_U}}(X)] \\
    \overline{\bphi} &:= [\overline{\phi_1}(W), \ldots, \overline{\phi_{k_U}}(W)].
 \end{align*}
Note that the Gram-Schmidt process is well-defined as the inner product between functions is defined. This process also creates upper triangular matrices $\Rpsi,\Rphi\in\RR^{k_U\times k_U}$ that map the orthonormal functions back to their originals $\bpsi = \overline{\bpsi} \Rpsi$ and $\bphi = \overline{\bphi} \Rphi$. Finally define the diagonal matrices $\Lambda_s := \mbox{diag}(s_1, \ldots, s_{k_U})$ and $\Lambda_m := \mbox{diag}(m_1, \ldots, m_{k_U})$ Now note the following decompositions:
\begin{align}\label{eq:definition:AB}
    \bbA = \bpsi \Lambda_s \bphi^\top = \overline{\bpsi} \Rpsi \Lambda_s \Rphi^\top (\overline{\bphi})^\top \;\;\;\;\;\;\;\; \bbB = \bpsi \Lambda_s \Lambda_m \bphi^\top = \overline{\bpsi} \Rpsi \Lambda_s \Lambda_m \Rphi^\top (\overline{\bphi})^\top
\end{align}
Notice that the operator $\bbA$ is a finite-rank operator mapping between two finite dimensional spaces $\bbA: \mathbb{H}_\phi \rightarrow \mathbb{H}_\psi$, as $\mathbb{H}_\phi,\mathbb{H}_\psi$ are closed subspaces spanned by $\bpsi,\bphi$. Then we can write the inverse of $\bbA$ as:
\begin{align*}
    \bbA^{-1} = \overline{\bphi} (\Rphi^\top)^{-1} \Lambda_s^{-1} \Rpsi^{-1} (\overline{\bpsi})^\top.
\end{align*}
It follows that,
\begin{align*}
    \bbA^{-1} \bbB = \overline{\bphi} (\Rphi^\top)^{-1} \Lambda_s^{-1} \Rpsi^{-1} (\overline{\bpsi})^\top \overline{\bpsi} \Rpsi \Lambda_s \Lambda_m \Rphi^\top (\overline{\bphi})^\top = \overline{\bphi} (\Rphi^\top)^{-1} \Lambda_m \Rphi^\top (\overline{\bphi})^\top.
\end{align*}

Given the above decomposition, we now show that we can identify $\Lambda_m,\bphi$ and thus $p(Y|C,\tilde{U}),p(W|\tilde{U})$ via eigendecomposition. First notice that eigendecomposition of $\bbA^{-1} \bbB$ gives $\overline{\bphi} (\Rphi^\top)^{-1} \Lambda_m \Rphi^\top (\overline{\bphi})^\top$.
As in the discrete observation setting we have that the eigenvalues $\lambda_1, \ldots, \lambda_{k_U}$ must satisfy $|\bbA^{-1} \bbB - \lambda \mathbf{I} | = |\Lambda_m - \lambda \mathbf{I} | = 0$. Therefore, $\lambda_i = p(Y|C, \tilde{U}=i)$.
Using the same argument as we use in Theorem~\ref{lemma:u_id}, it follows that column of $\overline{\bphi} (\Rphi^\top)^{-1}$ are eigenfunctions of $\bbA^{-1}\bbB$.
Applying the Gram–Schmidt process to $\overline{\bphi} (\Rphi^\top)^{-1}$, we recover $\overline{\bphi}$ and $(\Rphi^\top)^{-1}$. We can then invert $(\Rphi^\top)^{-1}$ to identify $\bphi$ via $\bphi = \overline{\bphi} \Rphi$, and thus $p(W|\tilde{U})$.

All that is left to show is how to identify $p(\tilde{U}|X=x), p(Y|=x,\tilde{U}), q(\tilde{U})/p(\tilde{U})$. 

\paragraph{Identifying $p(\tilde{U}|X=x)$.} 
Under the A\ref{asm:graph} , we have $W\indep X\mid U$. Hence, we can write
\begin{equation}\label{eq:identification:fu_x}
p(W\mid X=x) = \sum_{k=1}^{k_U} p(W\mid \tilde{U}=i)p(\tilde{U}=k\mid X=x).
\end{equation}
By the linear independence condition stated in A\ref{asm:w_u_lindep}, we know that $f(W\mid X=x)$ is uniquely represented by  $f(W\mid \tilde{U}=1),\ldots,f(W\mid \tilde{U}=k_U)$. This implies for any $x\in\Domain({X})$, we can identify $p(\tilde{U}=i\mid X=x)$.

\paragraph{Identifying $p(Y|x,\tilde{U})$.}
Note that
\begin{align*}
p(W,Y\mid x) &= \sum_{k=1}^{k_U} p(Y\mid x,\tilde{U}=i)p(W\mid \tilde{U}=k)p(\tilde{U}=k\mid x)\\
&= {\bphi}\begin{bmatrix}p(\tilde{U}=1\mid x)&\cdots&0\\& \ddots&\\0&\cdots&p(\tilde{U}=k\mid x) \end{bmatrix}\begin{bmatrix}p(Y\mid x,\tilde{U}=1)\\\vdots\\ p(Y\mid x,\tilde{U}=k)\end{bmatrix}\\
& = \overline{\bphi}{R_\phi\begin{bmatrix}p(\tilde{U}=1\mid x)&\cdots&0\\& \ddots&\\0&\cdots&p(\tilde{U}=k\mid x) \end{bmatrix}\begin{bmatrix}p(Y\mid x,\tilde{U}=1)\\\vdots\\ p(Y\mid x,\tilde{U}=k)\end{bmatrix}}.
\end{align*}
Since $\overline{\bphi}_1,\ldots,\overline{\bphi}_k$ are pairwise orthonormal, it follows that for any $i \in \{1, \ldots, k\}$
\begin{equation}\label{eq:identification:py_ux}
\dotp{p(W,Y\mid X=x)}{\overline{\bphi}_i} = \int_{\Wcal}p(W,Y\mid X=x)\overline{\bphi}_i dw =  z_i(Y,x,\tilde{U}=i). 
\end{equation}

Then, we can obtain 
$$
\begin{bmatrix}p(Y\mid x,\tilde{U}=1)\\\vdots\\ p(Y\mid x,\tilde{U}=k)\end{bmatrix} = R_\phi^{-1}\begin{bmatrix}\frac{1}{p(\tilde{U}=1\mid x)}&\cdots&0\\& \ddots&\\0&\cdots&\frac{1}{p(\tilde{U}=k\mid x)} \end{bmatrix} \begin{bmatrix} z_i(Y,x,\tilde{U}=1)\\\vdots\\  z_k(Y,x,\tilde{U}=k)\end{bmatrix}
$$
as the inverse of $R_\phi$ exists given A\ref{asm:w_u_lindep}.

\paragraph{Identifying $q(\tilde{U})/p(\tilde{U})$.}
Note that
\begin{align*}
q(X)=\sum_{k=1}^{k_U} q(X\mid \tilde{U}=k) q(\tilde{U}=k)
&=\sum_{i=k}^{k_U} p(X\mid \tilde{U}=k) q(\tilde{U}=k)\\
&=\sum_{i=k}^{k_U} p(\tilde{U}=k, X)\frac{q(\tilde{U}=k)}{p(\tilde{U}=k)}\\
&=\sum_{i=k}^{k_U} p(\tilde{U}=k|X)p(X)\frac{q(\tilde{U}=k)}{p(\tilde{U}=k)}.
\end{align*}
This implies that for all $x\in\Domain(X)$
\[
\frac{q(x)}{p(x)} = \sum_{k=1}^{k_U} p(\tilde{U}=k|x)\frac{q(\tilde{U}=k)}{p(\tilde{U}=k)}.
\]
Now select observed $x_1,\ldots,x_k$ that satisfies A\ref{asm:mat_invert}. Then, we can write 
\begin{align*}
    \underbrace{\begin{bmatrix}
    \frac{q(x_1)}{p(x_1)}\\
    \frac{q(x_2)}{p(x_2)}\\
    \vdots\\
    \frac{q(x_k)}{p(x_k)}
    \end{bmatrix}}_{\bv_{q,p,X}}
    =
    \underbrace{\begin{bmatrix}
    p(\tilde{U}=1\mid x_1)& p(\tilde{U}=2\mid x_1)&\cdots
    &p(\tilde{U}=k\mid x_1)\\
    p(\tilde{U}=1\mid x_2)& p(\tilde{U}=2\mid x_2)&\cdots
    &p(\tilde{U}=k\mid x_2)\\
    \vdots&\ddots&&\vdots\\
    p(\tilde{U}=1\mid x_k)& p(\tilde{U}=2\mid x_k)&\cdots
    &p(\tilde{U}=k\mid x_k)\\   
    \end{bmatrix}}_{\bM_{\tilde{U},X}}
    \underbrace{\begin{bmatrix}
    \frac{q(\tilde{U}=1)}{p(\tilde{U}=1)}\\
    \frac{q(\tilde{U}=2)}{p(\tilde{U}=2)}\\
    \vdots\\
    \frac{q(\tilde{U}=k)}{p(\tilde{U}=k)}
    \end{bmatrix}}_{\bv_{q,p,\tilde{U}}}.
\end{align*}
By A\ref{asm:mat_invert}, the confusion matrix  $\bM_{\tilde{U},X}$ is invertible and hence we can obtain $q(\tilde{U})/p(\tilde{U})$ via $\bv_{q,p,\tilde{U}} = \bM_{\tilde{U},X}^{-1}\bv_{q,p,X}$, and we are done.
\end{proof}

\section{Experimental details} \label{sec:appendix:experiments}
Here we describe the construction of the simulation study considered in Section \ref{simulation_study}.
We let $k_U \!=\! 2, k_X \!=\! 2, k_C \!=\! 3, k_Y \!=\! 2, k_W \!=\! 2$, where $X$ is continuous and $U$, $Y$, $C$, and $W$ are discrete. 
We generate $C$ as a multilabel variable where each dimension $C_j$ takes on a value of either 0 or 1, giving a discrete variable with $2^{k_C}$ states.
Let $\bo(v)$ be the $|V|$-dimensional one-hot representation of a sample from a categorical variable $v \in V$. 
Let $V_j$ designate the $j$-th dimension of a categorical random variable $V$.
Let ${\bf I}_k$ be the identity matrix of size $k \times k$.
Let $\mathrm{sign}$ be the function such that $\mathrm{sign}(z)=1$ if $z >0$ and $\mathrm{sign}(z)=0$ otherwise.
For a vector $\boldsymbol{\pi}$ drawn from the $(k_{U}\!-\!1)$-dimensional simplex, the data are simulated as
\begin{align*}
    U  \sim & \; \textrm{Categorical}(\boldsymbol{\pi}) \\
    W \mid U=u \sim & \; \mathrm{sign} \big( \mathcal{N}(\bo(u) \mathbf{M}_{W|U}, 1 \big) \\
    X \mid U=u \sim &\; \mathcal{N}(\bo(u) \mathbf{M}_{X|U}, \mathbf{ I}_{k_X}) \\
    C_j \mid X=x,U=u \sim &\; \textrm{Bernoulli} \Big(\mathrm{logit}^{-1} \big( x \mathbf{M}_{C|X,U=u} + \bo(u) \mathbf{M}_{C|U} \big) \Big) \\
    Y \mid C=c, U=u \sim &\; \textrm{Bernoulli} \Big ( \mathrm{logit}^{-1} \big(c \mathbf{M}_{Y|C, U=u} + \bo(u) \mathbf{M}_{Y|U} \big) \Big),
\end{align*}

where the matrices are defined as
\begin{align*}
    \;& \mathbf{M}_{W|U} :=
    \alpha_w \begin{bmatrix}
        -1 & 1
    \end{bmatrix}^\top
    \hspace{2mm}
\mathbf{M}_{X|U} :=
     \begin{bmatrix}
        -1 & 1 \\
        1 & -1
    \end{bmatrix}
    \hspace{2mm}
    \mathbf{M}_{C|U} :=
    \begin{bmatrix}
        -2 & 2 & 2 \\
        -1 & 1 & 2
    \end{bmatrix} \\
    \;& \mathbf{M}_{C|X,U=u_0} :=
    3\begin{bmatrix}
        -2 & 2 & -1 \\
        1 & -2 & -3
    \end{bmatrix} 
    \hspace{2mm}
    \mathbf{M}_{C|X,U=u_1} :=
    3\begin{bmatrix}
        2 & -2 & 1 \\
        -1 & 2 & 3
    \end{bmatrix} \\
    \;& \mathbf{M}_{Y|U} :=
    \begin{bmatrix}
        2 & 2
    \end{bmatrix}^\top 
    \mathbf{M}_{Y|C, U=u_0}  := 
    \begin{bmatrix}
        3 & -2 & -1
    \end{bmatrix} ^\top
    \mathbf{M}_{Y|C, U=u_1} := 
    \begin{bmatrix}
        3 & -1 & -2
    \end{bmatrix} ^\top.
\end{align*}

% Additional simulated results for continuous simulation
\begin{table}[t!]
\parbox{.45\linewidth}{
\centering
\caption{Cross-entropy for continuous observations ($\alpha_w=1$, $n \!=\! 10^4$), mean $\pm$ std over 10 training replicates.}
\label{tab:supp:continuous:cross-entropy}
\begin{tabular}{lrr}
\toprule
Method         &    Source              &           Target       \\
\midrule
ERM-SOURCE     &  0.1683 $\pm$ 0.0002 &  0.3979 $\pm$ 0.0007 \\
COVAR          &  0.2489 $\pm$ 0.0107 &  0.4206 $\pm$ 0.0711 \\
LABEL          &  0.1726 $\pm$ 0.0021 &  0.3635 $\pm$ 0.0111 \\
BBSE           &  0.2372 $\pm$ 0.0065 &  0.8461 $\pm$ 0.0233 \\
LSA-WAE-S &  0.1962 $\pm$ 0.0112 &  0.2530 $\pm$ 0.0248 \\
LSA-WAE-V      &  0.1751 $\pm$ 0.0112 &  0.3929 $\pm$ 0.0409 \\
\midrule
LSA-ORACLE     &  0.3300 $\pm$ 0.0250 &  0.1637 $\pm$ 0.0008 \\
ERM-TARGET     &  0.3415 $\pm$ 0.0010 &  0.1660 $\pm$ 0.0003 \\
\bottomrule
\end{tabular}
}
\hfill
\parbox{.45\linewidth}{

\caption{Accuracy for continuous observations ($\alpha_w=1$, $n \!=\! 10^4$), mean $\pm$ std over 10 training replicates.}
\label{tab:supp:continuous:accuracy}
\centering 
\begin{tabular}{lrr}
\toprule
Method         &    Source              &           Target       \\
\midrule
ERM-SOURCE     &  0.9179 $\pm$ 0.0006 &  0.7972 $\pm$ 0.0009 \\
COVAR          &  0.8807 $\pm$ 0.0011 &  0.9199 $\pm$ 0.0140 \\
LABEL          &  0.9153 $\pm$ 0.0011 &  0.8294 $\pm$ 0.0115 \\
BBSE           &  0.8935 $\pm$ 0.0030 &  0.5875 $\pm$ 0.0083 \\
LSA-WAE-S      &  0.8994 $\pm$ 0.0093 &  0.8924 $\pm$ 0.0181 \\
LSA-WAE-V      &  0.9121 $\pm$ 0.0113 &  0.7942 $\pm$ 0.0489 \\
\midrule
LSA-ORACLE     &  0.8555 $\pm$ 0.0225 &  0.9320 $\pm$ 0.0008 \\
ERM-TARGET     &  0.8653 $\pm$ 0.0030 &  0.9342 $\pm$ 0.0005 \\
\bottomrule
\end{tabular}
}
\end{table} 

To construct the setting used for the simulation experiments, we draw a sample from a source domain where $\boldsymbol{\pi}$ is such that $p(U=1)=0.1$. We further draw several target distributions where $\boldsymbol{\pi}$ is such that $q(U=1) \in \{0.1, 0.2, \dots, 0.9\}$. 
We vary the noisiness of the proxy $W$ by generating three copies of the target domain datasets where $\alpha_w \in \{1, 2, 3\}$ such that greater values for $\alpha_w$ indicate less noise.

For the ERM baselines considered for the experiment presented in Tables \ref{table:continuous_synthetic}, \ref{tab:supp:continuous:cross-entropy}, and \ref{tab:supp:continuous:accuracy}, and Figure \ref{fig:synthetic_sweep} we use a multilayer perceptron (MLP) with one hidden layer of size 100 with ReLU activations. We train for 200 epochs with a batch size of 128 using stochastic gradient descent (SGD) with a learning rate of 0.01 that is reduced by a factor of ten if the training loss has not improved by at least 0.01 in the last 20 epochs, with a minimum learning rate of $10^{-7}$. We use a weight decay of $10^{-6}$.
The training procedure is implemented using Tensorflow 2.12.0.

For the covariate shift adjustment baseline, we fit a domain classifier, using the same model architecture and training procedure in the model for $Y$, derive instance weights following \cite{shimodaira2000improving}, and apply weighted ERM with the same procedure as in the unweighted case.
The label shift baseline with oracle access to labels in the target domain (LABEL) applies weighted ERM with learned class weights $q(Y)/p(Y)$ based on observed frequencies in the validation set in the source domain and the training set in the target domain.
For the BBSE approach \citep{lipton2018detecting}, where the labels $Y$ are not available in the target domain, we first fit an auxiliary model with ERM to estimate $p(Y \mid X)$ in the source domain, using the same procedure as before, and use its predictions on the source validation set and target training set to estimate $q(Y)/p(Y)$ using the soft confusion matrix approach of \citep{garg2020unified}.
We clip weights derived from the confusion matrix approach to the range $[0.01, 15]$.
For the adjustment procedure that implements equation (\ref{eq:goal}) with oracle access to $U$ (LSA-ORACLE), we fit auxiliary models for $p(Y \mid X, U)$ and $p(U \mid X)$, using the model for $p(U \mid X)$ directly in equation (\ref{eq:goal}) and as the predictor used to derive $q(U)/p(U)$ with the soft confusion matrix approach.
We apply temperature scaling as an additional calibration step to each auxiliary model and the final result of each procedure.
The temperature scaling procedure is implemented as a uniform scaling of the output logits by a scalar learned on the validation data using SGD with a fixed learning rate of 0.001. 

% WAE experiments
For the WAE-based adaptation approach, we use an encoder with one hidden layer of size 100 and set the dimensionality of the learned latent space over $\tilde{U}$ to be 10. Following the construction in section \ref{sect:estimation}, we use a model architecture and objective function that reflects the factorization of the joint distribution implied by the causal graph (LSA-WAE-S). 
For this approach, we use separate decoder networks $\{f_Y, f_C, f_X, f_W\}$ of one hidden layer of size 100 for each of the observed variables. 
We use categorical cross-entropy losses over the reconstruction of $Y$ and $W$ and the elementwise binary cross-entropy loss over the elements of $C$.
The loss $\ell_X$ over $X$ is given by $\log(\sigma_X) + \frac{1}{\sigma_X} (X - f_X(\tilde{U}))^2$, where $\sigma_X$ is a learned parameter.
The weight $\beta$ on reconstruction loss associated with each of $C$, $W$, and $Y$ is the reciprocal of the entropy of the variable, estimated on the training data of the source domain, and the weight $\beta_X$ is analogously the reciprocal of the variance of $X$.
The KL divergence term in the loss is weighted by a factor of 3.
The WAE is fit using the RMSprop optimizer for 200 epochs using a learning rate of $10^{-4}$, annealed with the same strategy as in the baseline approaches.
We anneal the temperature of the Gumbel-softmax distribution used for sampling $\tilde{U}$ by a factor of 0.9999 at each training iteration, starting from an initial temperature of 1 to a minimum temperature of 0.01.

\section{Estimation procedure for continuous random variables}\label{sec:appendix:continuous}

In this section, we introduce the estimation procedure for continuous random variables, an extension of Algorithm~\ref{algo:method}. The continuous setting requires an additional step to select $k_U$ points from the domain of $X$ such that the constructed confusion matrix is invertible. While there exist various density function estimators, the main challenge is finding a reliable density estimator for computing of the underlying eigenfunctions. To this end, we employ the Least-Squares Conditional Density Estimator (LS-CDE)~\citep{sugiyama2010conditional}, where the set of basis functions are pre-defined by users. This method allows us to easily compute the eigenfunctions of the underlying density operators, which, in turn are a finite set of basis functions. The complete estimation procedure is presented in Algorithm~\ref{algo:method:continuous:main}. The algorithm is implemented for discrete $U$, $Y$, $C$ and continuous $X$, $W$, which matches the simulation setting in Section~\ref{simulation_study}. 
We first briefly introduce the LS-CDE method and discuss the selection of basis functions, followed by  the details of each step.

\begin{algorithm}[t]
\caption{Estimating Continuous $q(Y|\xnew)$ .}\label{algo:method:continuous:main}
\begin{algorithmic}[1]
% {
  \Require
  source $\{(w_i, x_i, c_i, y_i)\}_{i=1}^n$;
  target $\{\tilde{x}_j\}_{j=1}^m$; Given $\xnew$ 
  \State $\hat{p}(W|\tilde{U})\leftarrow$  Algorithm~\ref{algo:continuous:pw_u}$( \{(w_i, x_i, c_i, y_i)\}_{i=1}^n)$
  \State $[\hat{q}(\tilde{\bf U})/\hat{p}(\tilde{\bf U})]\leftarrow$ Algorithm~\ref{algo:continuous:qu_pu}$(\{(w_i, x_i)\}_{i=1}^n,\;\{\tilde{x}_j\}_{j=1}^m,\; \{\hat{p}(W|\tilde{U}=k)\}_{k=1}^{k_U} )$
   \State $ \hat{p}(\bUt\mid\xnew)$ is obtained by solving~\eqref{eq:lsq:pux} 
  \State $\hat{p}({\bf Y} \mid \tilde{\bf U}, \xnew)\leftarrow$ Algorithm~\ref{algo:continuous:pwy_x}$(\{(w_i, x_i, y_i)\}_{i=1}^n, \;\{\hat{p}(W|\tilde{U}=k)\}_{k=1}^{k_U} , \hat{p}(\bUt\mid\xnew))$
  \For{$y=1,\ldots,k_Y$}
   \State  $\hat{q}(y\mid\xnew)\leftarrow\sum_{i=1}^{k_U}\hat{p}(y\mid \xnew, \tilde{U}=i)\hat{p}(\tilde{U}=i\mid \xnew){\frac{{q}{(\tilde{U})}}{{p}(\tilde{U})}}$
  \EndFor
  \State  $\hat{q}({\bf Y}\mid \xnew)\leftarrow\hat{q}({\bf Y}\mid \xnew)/\sum_{i=1}^{k_y}\hat{q}(y\mid\xnew)$
%  }
\end{algorithmic}
\end{algorithm}

\subsection{Brief introduction of least-squares conditional density estimator}\label{appendix:intro:lscde}

Given a pair of random variables ($X$, $Y$), the Least-Squares Conditional Density Estimator (LS-CDE))~\citep{sugiyama2010conditional} assumes the following form 
\[
p(Y\mid X) = \frac{p(X,Y)}{p(X)}:= r(X,Y),
\]
where $r(X,Y)$ is the density ratio function. 
Let $\{g_1(x,y), \ldots, g_m(x,y)\}$ to be a set of basis functions such that (1) $g_i(x,y)\geq 0$ for every $i\in[m]$ and $x\in\Domain(X)$ and $y\in\Domain(Y)$. To estimate $r(X,Y)$, we consider the estimate $\hat{r}_{\bm\alpha}(X,Y)$ that lies in the linear subspace of $r(x,y)\in\{{\bm\alpha}^\top{\bf g}(x,y):{\bm\alpha}\in{\RR}^m\}$ with ${\bf g}(x,y) = (g_1(x,y),\ldots,g_m(x,y))$. Hence, the goal is to estimate the coefficient vector ${\bm\alpha}$ from data. To this end, \citet{sugiyama2010conditional} proposed the following objective functional:
\[
 \arg\min \frac{1}{2}\int\int\rbr{{\bm\alpha}^\top{\bf g}(x,y) - \frac{p(x,y)}{p(x)}}^2p(x)dxdy.
\]
With simple algebraic manipulation, the above objective function is equivalent as the following:
\[
\hat{\bm\alpha}=\arg\min \frac{1}{2}{\bm\alpha}^\top{\bf H}{\bm\alpha} - {\bf h}^\top{\bm\alpha}, 
\]
where
\[
{\bf H} := \int\int {\bf g}(x,y){\bf g}(x,y)^\top p(x) dydx,\quad {\bf h}:=\int\int {\bf g}(x,y)p(x,y) dx dy.
\]
Since the density functions $p(x,y)$ and $p(x)$ are unknown, we can compute the empirical estimators of ${\bf H}$ and ${\bf h}$ from independent samples $\{(x_i,y_i)\}_{i=1}^n$ as follows:
\[
\hat{\bf H} := \frac{1}{n}\sum_{i=1}^n \int {\bf g}(x_i,y){\bf g}(x_i, y)^\top dy,\quad \hat{\bf h} := \frac{1}{n}\sum_{i=1}^n {\bf g} (x_i,y_i).
\]
To stabilize the empirical estimator, we additionally add a regularizer $\lambda{\bm\alpha}^\top{\bm\alpha}$ with $\lambda>0.$ The overall objective function is summarized as

\begin{equation}\label{eq:obj}
\tilde{\bm\alpha}:=\arg\min \frac{1}{2}{\bm\alpha}^\top\hat{\bf H}{\bm\alpha} - \hat{\bf h}^\top{\bm\alpha} + \lambda{\bm\alpha}^\top{\bm\alpha}.
\end{equation}
Note that~\eqref{eq:obj} is a quadratic program and yields an analytical solution
\[
\tilde{\bm\alpha} = (\hat{\bf H} + \lambda{\bf I})^{-1}\hat{\bf h}.
\]

To ensure the the estimated conditional density is non-negative everywhere, we output $\hat{\bm \alpha}=(\hat\alpha_1,\ldots,\hat\alpha_m)$ such that $\hat\alpha_i=\max(0, \tilde{\alpha}_i)$ for $i\in[m]$.
In our simulations, we found that choosing $\lambda=10^{-2}$ suffices to provide good results.

Choosing the candidate basis functions requires knowledge of the underlying distributions. When the class of distribution is unknown, Gaussian kernel functions can often be used as a basis to provide a good approximation of the distribution(s). In addition, the Gaussian kernel function yields and analytical result for the integral $\hat{\bf H}$. Specifically, let $g_\ell(x,y)=\exp(-\|(x-x_\ell\|^2+\|y-y_\ell\|^2)/2\sigma^2)$ and $g_{\ell'}(x,y)=\exp(-(\|x-x_{\ell'}\|^2+\|y-y_{\ell'}\|^2)/2\sigma^2)$ for some $x_\ell,x_{\ell'}\in\Domain(X)$, $y_\ell,y_{\ell'}\in\Domain(Y)$ and $\sigma>0$, we have
\[
\int g_\ell(x,y)g_{\ell'}(x,y)dy = (\sqrt{\pi}\sigma)^{d_y}\exp\rbr{-\frac{\|y_\ell-y_{\ell'}\|^2}{4\sigma^2}}
\exp\rbr{-\frac{\|x-x_{\ell'}\|^2+\|x-x_{\ell}\|^2}{2\sigma^2}}
,
\]
where $d_y$ is the dimension of $Y$. Hence, we do not need to resort to numerical methods to compute $\hat{\bf H}$.

\subsection{Implementation details of Algorithm~\ref{algo:continuous:pw_u}}\label{appendix:step:continuous:pw_u}

\begin{algorithm}[t]
\caption{Estimate Continuous $p(W|U)$, details provided in Section~\ref{appendix:step:continuous:pw_u}}\label{algo:continuous:pw_u} 
\begin{algorithmic}[1]
% {
  \Require
  source $\mathcal{P} \!=\! \{(w_i, x_i, c_i, y_i)\}_{i=1}^n$; Given $c\in\Domain(C)$ and $y\in\Domain(Y)$  
  \State { $\hat{p}(W,X\mid c)$ and $\hat{p}(W,X,y\mid c)$ via least-squares density estimator~\eqref{eq:lsde:pwx_c}--\eqref{eq:lsde:pwxy_c}}
  \State \text{Find the decomposition $\hat{p}(W,X\mid c)$ and $\hat{p}(W,X,y\mid c)$~\eqref{eq:decomposition}}
  \State Eigendecompose $\hat{\bbA}^{-1}\hat\bbB'$~\eqref{eq:A_invB} and obtain the eigenfunctions 
  \State \text{Compute the inverse of the eigenfunctions to obtain $\{\hat{p}(W|\tilde{U}=1),\ldots,\hat{p}(W|\tilde{U}=k_U)\}$
  }
%   }
\end{algorithmic}
\end{algorithm}

In this section, we introduce the implementation details of Algorithm~\ref{algo:continuous:pw_u} step-by-step. 

\emph{ Step 1 of Algorithm~\ref{algo:continuous:pw_u}}.
Since both $Y,C$ are discrete random variables, for any fixed $c\in\Domain(C),y\in\Domain(Y)$, the conditional density functions $\hat{p}(w,x\mid c)$ and $\hat{p}(w,x,y\mid c)$ can be estimated by marginal density estimators. We use the least-squares density estimator to estimate both $\hat{p}(w,x\mid c)$ and $\hat{p}(w,x,y\mid c)$ with Gaussian kernel basis functions of length-scale $1$: 
\begin{equation}\label{eq:basisfunctions}
\bigg\{g_\ell(w,x)=\varphi_\ell(w)\vartheta_\ell(x):\varphi_\ell(w)=\exp\rbr{-\frac{\|w-\bar{w}_\ell\|^2}{2}},\vartheta_\ell(x)=\exp\rbr{-\frac{\|x-\bar{x}_\ell\|^2}{2}},\ell=1,\ldots,k_U\bigg\},
\end{equation}
where the centers $\bar{x}_{\ell}$, $\bar{w}_\ell$ for $\ell=1,\ldots,m$ are chosen to match the means of the mixture models from the data generation process, namely ${\bf M}_{W\mid U}$ and ${\bf M}_{X\mid U}$ defined in Section~\ref{sec:appendix:experiments}. Here, we assume the density functions have the following form
\[
{p}(w,x|c) = {\bm \alpha}_c^\top{\bf g}(w,x), \quad {p}(w,x,y\mid c) = p(w,x\mid y,c){p}(y\mid c)={\bm\beta}_{y,c}^\top {\bf g}(w,x){p}(y\mid c),
\]
where the conditional probability $p(y\mid c)$ can be seen as a constant given that $y,c$ are fixed. 
Hence, it is natural to assume that the empirical marginal density estimator has the following form
\[
\hat{p}(w,x|c) = \hat{\bm \alpha}_c^\top{\bf g}(w,x), \quad \hat{p}(w,x\mid y,c) = \hat{\bm\beta}_{y,c}^\top {\bf g}(w,x),
\]
 We obtain  coefficient vectors $\hat{\bm \alpha}_c$ and $\hat{\bm\beta}_{y,c}$ by solving a similar objective function as LS-CDE:
\begin{align*}
{\bm\alpha}_c &=\argmin \frac{1}{2}\int\int \rbr{{\bm \alpha}_c^\top{\bf g}(w,x) - p(w,x\mid c)}^2dwdx;\\
{\bm\beta}_{y,c} &=\argmin \frac{1}{2}\int\int \rbr{{\bm \beta}_{y,c}^\top{\bf g}(w,x) - p(w,x\mid y,c)}^2dwdx.
\end{align*}
Given subsets of samples $\{(x_i,w_i,c_i)\}_{i\in \Ncal_c}$ with $\Ncal_c=\{i\in [n]:c_i=c\}$ and $\{(x_i,y_i,w_i,c_i)\}_{i\in \Ncal_{y,c}}$ with $\Ncal_{y,c}=\{i\in[n]: y_i=y,c_i=c\}$ from the original sample set $\{(x_i,y_i,w_i,c_i)\}_{i=1}^n$, we can construct the associated regularized empirical estimators. Define 
$\tilde{\bf H} = \int\int{\bf g}(x,y){\bf g}(x,y)^\top dxdy$, $\hat{\bm\alpha}_c$, $\hat{\bm\beta}_{y,c}$ are obtained by solving the following function 
\begin{align}
\hat{\bm\alpha}_c &= \arg\min \frac{1}{2}{\bm\alpha}^\top\tilde{\bf H}{\bm\alpha} - \hat{\bf h}_c^\top{\bm\alpha}+\lambda{\bm\alpha}^\top{\bm\alpha},\quad
\hat{\bf h}_c =\frac{1}{|\Ncal_c|}\sum_{i\in\Ncal_c} {\bf g}(w_i,x_i);\label{eq:lsde:pwx_c} \\
\hat{\bm\beta}_{y,c} &= \arg\min \frac{1}{2}{\bm\beta}^\top\tilde{\bf H}{\bm\beta} - \hat{\bf h}_{y,c}^\top{\bm \beta} + \lambda{\bm\beta}^\top{\bm\beta},\quad \hat{\bf h}_{y,c} = \frac{1}{|\Ncal_{y,c}|}\sum_{i\in\Ncal_{y,c}} {\bf g}(w_i,x_i).\label{eq:lsde:pwxy_c}
\end{align}
It is worth noting that the integral of Gaussian kernel functions $\tilde{\bf H} = \int\int{\bf g}(w,x){\bf g}(w,x)^\top dxdw$ has an analytical form and the objective function is quadratic, yielding analytical forms $\hat{\bm\alpha}_c=(\tilde{\bf H}+\lambda{\bf I})^{-1}\hat{\bf h}_c$ and $\hat{\bm\beta}_{y,c}=(\tilde{\bf H}+\lambda{\bf I})^{-1}\hat{\bf h}_{y,c}$.

\emph{ Step 2--3 of Algorithm~\ref{algo:continuous:pw_u}}. With the estimated from \emph{Step 1}, we can construct the empirical integral operator $\hat\bbA$ and $\hat\bbB'$ with respect to the kernel functions $\hat{p}(w,x\mid c)$ and $\hat{p}(w,x\mid y,c)$, respectively, as
\begin{equation}\label{eq:decomposition}
\hat\bbA = \sum_{i=1}^{k_U}\hat\alpha_{c,i}\vartheta_i(X)\otimes\varphi_i(W),\quad \hat\bbB' = \sum_{i=1}^{k_U}\hat\beta_{y,c,i}\vartheta_i(X)\otimes\varphi_i(W).
\end{equation}
To find the inverse of $\hat\bbA$, we first run the Gram-Schmidt procedure on $\{\vartheta_1,\ldots,\vartheta_{k_U}\}$ and $\{\varphi_1,\ldots,\varphi_{k_U}\}$ respectively to orthonormalize the basis functions. Since we are using Guassian kernels, the Gram-Schmidt procedure can be obtained analytically. We provide the example of constructing the first two orthonormal components and the rest of them can be constructed similarly. We have
\begin{align*}
    \overline{\vartheta}_1&=\frac{\vartheta_1}{\norm{\vartheta_1}},&&
    \dotp{\vartheta_1}{\vartheta_1} = \int \vartheta_1(x)\vartheta_1(x)dx = (\sqrt{\pi})^{d_x}; 
    \\
    \overline{\vartheta}_2&=\frac{\vartheta_2-\dotp{\vartheta_2}{\overline{\vartheta}_1}}{\norm{\vartheta_2-\dotp{\vartheta_2}{\overline{\vartheta}_1}}},&& \dotp{\vartheta_2}{\overline{\vartheta}_1} = \int \overline\vartheta_1(x)\vartheta_2(x)dx 
    = (\sqrt{\pi})^{d_x/2}\exp\rbr{-\frac{\norm{\bar{x}_1-\bar{x}_2}^2}{4}}.
\end{align*}
Let ${\bf R}_{\vartheta}\in\RR^{k_U\times k_U}$ be a coefficient matrix whose $ij$-th entry is $\dotp{\overline{\vartheta}_i}{\vartheta_j}$ and  ${\bf R}_{\varphi}\in\RR^{k_U\times k_U}$ be a coefficient matrix whose $ij$-th entry is $\dotp{\overline{\varphi}_i}{\varphi_j}$. Then, it follows that
\[
\hat\bbA^{-1} = \overline{\bm\varphi} ({\bf R}_{\varphi}^\top)^{-1}\begin{bmatrix}\hat{\alpha}_{c,1} &&\\
&\ddots&\\
&&\hat{\alpha}_{c,k_U}\end{bmatrix}^{-1}{\bf R}_{\vartheta}^{-1}\overline{{\bm\vartheta}}^\top,\quad
\hat\bbB' = \overline{\bm\vartheta} {\bf R}_{\vartheta}
\begin{bmatrix}\hat{\beta}_{y,c,1} &&\\
&\ddots&\\
&&\hat{\beta}_{y,c,k_U}\end{bmatrix}
{\bf R}_{\varphi}\overline{{\bm\varphi}}^\top.
\]
Hence, we can obtain
\begin{equation}\label{eq:A_invB}
\hat\bbA^{-1}\hat{\bbB}' = \overline{{\bm\varphi}}\underbrace{({\bf R}_\varphi^{\top})^{-1}\begin{bmatrix}\hat{\alpha}_{c,1} &&\\
&\ddots&\\
&&\hat{\alpha}_{c,k_U}\end{bmatrix}^{-1}
\begin{bmatrix}\hat{\beta}_{y,c,1} &&\\
&\ddots&\\
&&\hat{\beta}_{y,c,k_U}\end{bmatrix}
{\bf R}_{\varphi}}_{\bf M}\overline{{\bm\varphi}}^\top,
\end{equation}
where the eigenfunctions of $\hat{\bbA}^{-1}\hat{\bbB}'$ are obtained by first computing the eigenvectors of ${\bf M}$, denoted as $\hat{\bm\eta}_1,\ldots,\hat{\bm\eta}_{k_U}$ and then projecting them to the basis functions $\{\overline{\varphi}_1,\ldots,\overline{\varphi}_{k_U}\}$. That is, the $j$-th eigenfunction of $\hat{\bbA}^{-1}\hat{\bbB}'$ is $\sum_{i=1}^{k_U}\hat\eta_{j,i}\overline\varphi_i(w)$.

\emph{ Step 4 of Algorithm~\ref{algo:continuous:pw_u}}. Let $\hat{\bf D}=
\begin{bmatrix}\hat{\bf d}_1\cdots\hat{\bf d}_{k_U}\end{bmatrix}
=\begin{bmatrix}\hat{\bm\eta}_1\cdots\hat{\bm\eta}_{k_U}\end{bmatrix}^{-1}$, by proof of Theorem~\ref{thm:cont_q_y_x}, the estimate of ${p}(w\mid \tilde{U}=j)$ is $\sum_{i=1}^{k_U}\hat{d}_{j,i}\overline{\varphi}_i(w)/\norm{\sum_{i=1}^{k_U}\hat{d}_{j,i}\overline{\varphi}_i}_{L_1}$. 

\subsection{Implementation details of Algorithm~\ref{algo:continuous:qu_pu}}\label{section:details:qu_pu}
\begin{algorithm}[t]
\caption{Estimate $q(\tilde{U})/p(\tilde{U})$, details provided in Section~\ref{section:details:qu_pu}}\label{algo:continuous:qu_pu}
\begin{algorithmic}[1]
% {
  \Require
  source $\mathcal{P} \!=\! \{(w_i, x_i)\}_{i=1}^n$;
  target $\mathcal{Q} \!=\! \{x_j\}_{j=1}^m$; $\{\hat{p}(W|U=1),\ldots,\hat{p}(W|U=k_U)\}$;
  \State Compute $\hat{p}(W\mid X)$ via LS-CDE~\citep{sugiyama2010conditional}
  \State Run K-means clustering to select $k_U$ centers: $x_{1},\ldots,x_{k_U}$
  \For{$x$ in $\{x_{1},\ldots,x_{k_U}\}$}
    \State $ \hat{p}(\bUt|X=x)$ is obtained by solving~\eqref{eq:lsq:pux} 
  \EndFor
  \State $[\hat{q}(\bUt)/\hat{p}(\bUt)]$ is obtained by solving~\eqref{eq:lsq:qupu}
%  }
\end{algorithmic}
\end{algorithm}

The first step of Algorithm~\ref{algo:continuous:qu_pu} is implemented by LS-CDE introduced in Appendix~\ref{appendix:intro:lscde} with the set of basis functions defined in~\eqref{eq:basisfunctions} and $\lambda=10^{-2}$. The second step is straightforward. Hence, we only discuss the implementation details of Step~4 in Algorithm~\ref{algo:continuous:qu_pu}. The identification result~\eqref{eq:identification:fu_x} suggests the construction of the following program
\begin{align}\label{eq:lsq:l2:pu_x}
\hat{p}(\tilde{\bf U}\mid X=x)=&\arg\min\quad\left\|\hat{p}(W\mid X=x)- \sum_{k=1}^{k_U} \hat{p}(W\mid \tilde{U}=i)p(\tilde{U}=k\mid X=x)\right\|_{L_2}^2\\
    &\text{subject to }\quad 0\leq p(\tilde{U}=i\mid x)\leq 1,\quad i=1,\ldots,k_U;\notag\\
    &\quad\quad\quad\quad\quad \sum_{i=1}^{k_U}p(\tilde{U}=i\mid x)=1.\notag
\end{align}
Define the design matrix ${\bf G}\in\RR^{k_U\times k_U}$:
\[
{\bf G} = \begin{bmatrix}
    \dotp{\hat{p}(W\mid\tilde{U}=1)}{\hat{p}(W\mid\tilde{U}=1)}&\cdots&
    \dotp{\hat{p}(W\mid\tilde{U}=1)}{\hat{p}(W\mid\tilde{U}=k_U)}\\
    \vdots&\ddots&\vdots\\
    \dotp{\hat{p}(W\mid\tilde{U}=k_U)}{\hat{p}(W\mid\tilde{U}=1)}&\cdots&
    \dotp{\hat{p}(W\mid\tilde{U}=k_U)}{\hat{p}(W\mid\tilde{U}=k_U)}    
    \end{bmatrix}.
\]
Given $x\in\Domain{X}$, since $p(\tilde{U}\mid x)$ is discrete random variable with $k_U$ states, we can reformulate~\eqref{eq:lsq:l2:pu_x} as
\begin{align}\label{eq:lsq:pux}
    \hat{p}(\tilde{\Ub}\mid x) =& \arg\min \left\|
    \begin{bmatrix}
    \dotp{\hat{p}(W\mid x)}{\hat{p}(W\mid\tilde{U}=1)}\\
    \vdots\\
    \dotp{\hat{p}(W\mid x)}{\hat{p}(W\mid\tilde{U}=k_U)}
    \end{bmatrix}
    -
    {\bf G}\begin{bmatrix}
    p(\tilde{U}=1\mid x)\\
    \vdots\\
    p(\tilde{U}=k_U\mid x)
    \end{bmatrix}
    \right\|_F^2,\\
    &\text{subject to }\quad 0\leq p(\tilde{U}=i\mid x)\leq 1,\quad i=1,\ldots,k_U;\notag\\
    &\quad\quad\quad\quad\quad \sum_{i=1}^{k_U}p(\tilde{U}=i\mid x)=1,\notag
\end{align}
which is a constrained least-squares problem and can be optimized efficiently by sequential least-squares programming. 

Finally, to compute the vector $q(\tilde{\Ub})/p(\tilde{\bf U})$, we need to estimate the marginal density $p(x)$ and $q(x)$. This can be implemented through a similar approach as introduced in $\emph{Step 1}$ in Appendix~\ref{appendix:step:continuous:pw_u}. We briefly introduce the procedure to estimate $p(x)$; The estimation procedure of $q(x)$ follows similarly. Consider the subspace spanned by Gaussian kernel basis functions of length-scale  $1$,
$$
\bigg\{\vartheta_\ell(x):\vartheta_\ell(x)=\exp\rbr{-\frac{\|x-\bar{x}_\ell\|^2}{2}},\ell=1,\ldots,k_U\bigg\}.
$$
We assume that the distribution is of the form $p(x)={\bm\alpha}^\top{\bm\vartheta}(x)$ with ${\bm\vartheta}(x)=\begin{bmatrix}
\vartheta_1(x)\cdots\vartheta_{k_U}(x)
\end{bmatrix}$. Hence, it follows that ${\bm\alpha}$ minimizes $\int({\bm\alpha}^\top{\bm\vartheta}(x)-p(x))^2dx$. Then, it is natural to formulate the empirical estimator as $\hat{p}(x)=\hat{\bm\alpha}^\top{\bm\vartheta}(x)$, where we can obtain $\hat{\bm\alpha}$ by solving the following problem:
\begin{align*}
\tilde{\bm\alpha} = \arg\min \frac{1}{2}{\bm\alpha}^\top{\bf H}_x{\bm\alpha} - \hat{\bf h}_x^\top{\bm\alpha} +\lambda{\bm\alpha}^\top{\bm\alpha},
\end{align*}
where ${\bf H}_x=\int{\bm\vartheta}(x){\bm\vartheta}(x)^\top dx$, and $\hat{\bf h}_x=\frac{1}{n}\sum_{i=1}^n{\bm\vartheta}(x_i)$. Then we set $\hat{\alpha}_i = \max(0, \tilde{\alpha}_i)$ for $i=1,\ldots,k_U$ to ensure the non-negativity of the distribution. 

After estimating $\hat{q}(x)$ and $\hat{p}(x)$, we construct the vector $[\hat{q}({\bf x})/\hat{p}({\bf x})]^\top=[\hat{q}(x_1)/\hat{p}(x_{1})\cdots\hat{q}(x_{k_U})/\hat{p}(x_{k_U})]^\top$ by querying $x_1,\ldots x_k$ from $\hat{q}(x)$ and $\hat{p}(x)$. Then we can obtain $\hat{q}(\bUt)/\hat{p}(\bUt)$ by solving the following constrained least-squares problem:
\begin{align}\label{eq:lsq:qupu}
[\hat{q}(\bUt)/\hat{p}(\bUt)] &= \arg\min\quad \left\|
\begin{bmatrix}
    \frac{\hat{q}(x_1)}{\hat{p}(x_{1})}\\\vdots\\\frac{\hat{q}(x_{k_U})}{\hat{p}(x_{k_U})}
\end{bmatrix}
-
\begin{bmatrix}
    p(\tilde{U}=1\mid x_1)& p(\tilde{U}=2\mid x_1)&\cdots
    &p(\tilde{U}=k\mid x_1)\\
    p(\tilde{U}=1\mid x_2)& p(\tilde{U}=2\mid x_2)&\cdots
    &p(\tilde{U}=k\mid x_2)\\
    \vdots&\ddots&&\vdots\\
    p(\tilde{U}=1\mid x_k)& p(\tilde{U}=2\mid x_k)&\cdots
    &p(\tilde{U}=k\mid x_k)
\end{bmatrix}
[{q}(\bUt)/{p}(\bUt)]^\top
\right\|_F^2\\
&\quad\;\text{subject to}\quad [{q}(\bUt)/{p}(\bUt)]_{i}\geq 0,\quad i=1,\ldots,k_U.\notag
\end{align}

\subsection{Implementation details of Algorithm~\ref{algo:continuous:pwy_x}}\label{appendix:details:pwy_x}

\begin{algorithm}[t]
\caption{Estimate $p(Y\mid \xnew, \tilde{U})$, details provided in Section~\ref{appendix:details:pwy_x}}\label{algo:continuous:pwy_x}
\begin{algorithmic}[1]
% {\small
  \Require
  source $\mathcal{P} \!=\! \{(w_i,y_i ,x_i)\}_{i=1}^n$, $\hat{p}(\bUt\mid\xnew)$
  \State Compute $\hat{p}(Y\mid X)$ using MLP
  \For{$y=1,\ldots,k_Y$}
  \State Estimate ${p}(W\mid X, y)$ via LS-CDE~\citep{sugiyama2010conditional}
  \State Compute $\hat{p}(W,y\mid \xnew)=\hat{p}(W\mid X, y)\hat{p}(y\mid\xnew)$
  \EndFor
  \State Compute $\hat{p}({\bf Y}\mid \tilde{\Ub}, \xnew)$ by solving~\eqref{eq:lsq:py_xu}.
%  }
\end{algorithmic}
\end{algorithm}

\emph{}

In this section, we introduce the implementation details of Algorithm~\ref{algo:continuous:pwy_x}. First, we learn the distribution of ${p}(y|x)$ by fitting a Multi-Layer Perceptron classifier (MLP) with `ReLU' activation function attached at the output of the hidden layers. Given a fixed $y$, we estimate $p(W\mid X,y)$ by first constructing the subset of samples $\{x_i,w_i\}_{i\in \Ncal_y}$ such that $\Ncal_y=\{i\in[n]:y_i=y\}$. Then, $p(W\mid X,y)$ is estimated by fitting LS-CDE~\citep{sugiyama2010conditional} with $\{x_i,w_i\}_{i\in \Ncal_y}$ and the basis functions~\eqref{eq:basisfunctions}. 

To estimate $p(Y\mid \tilde{U},X)$, we first recall the relation of $p(W\mid \tilde{U})$, $p(\tilde{U}\mid X)$, $p(W,Y\mid X)$ and $p(Y\mid \tilde{U},X)$ defined in~\eqref{eq:identification:py_ux}. Computing the inverse of the matrix might lead to numerical instability in practice and hence we solves a constrained least-squares problem as an alternative. Given estimated $\{\hat\phi_i = \hat{p}(W\mid\tilde{U}=i)\}_{i=1}^{k_U}$, we run the Gram-Schmidt procedure to obtain $\hat{{\bphi}}_{\text{ortho}}=[\hat{{\phi}}_{\text{ortho},1},\ldots,\hat{{\phi}}_{\text{ortho},k_U}]$ and $\hat\Rb_{\phi}$. Let $\hat{\bf R}=\hat{\bf R}_\phi\diag(\hat{p}(\tilde{U}=1\mid\xnew),\ldots,\hat{p}(\tilde{U}=k_U\mid\xnew))$. Then, we estimate $p({\bf Y}\mid \tilde{\bf U},\xnew)\in\RR^{k_U\times k_Y}$  by solving the following constrained optimization problem
\begin{align}\label{eq:lsq:py_xu}
&\hat{p}({\bf Y}\mid \tilde{\bf U}, \xnew)=\notag\\
&\quad\arg\min\quad\left\|
\begin{bmatrix}
    \dotp{p(Y=1,W\mid \xnew)}{\hat{{\phi}}_{\text{ortho},1}}&\cdots&
    \dotp{p(Y=k_Y,W\mid \xnew)}{\hat{{\phi}}_{\text{ortho},1}}\\
    \vdots&\ddots&\vdots\\
    \dotp{p(Y=1,W\mid \xnew)}{\hat{{\phi}}_{\text{ortho},k_U}}&\cdots&
    \dotp{p(Y=k_Y,W\mid \xnew)}{\hat{{\phi}}_{\text{ortho},k_U}}
\end{bmatrix}
-({\bf I}\otimes_k \hat{\bf R} ){p}({\bf Y}\mid \tilde{\bf U}, \xnew)\right\|_F^2  \\
&\quad\text{subject to}\quad 0\leq p(Y=y\mid \tilde{U}=i,\xnew)\leq 1,\quad y=1,\ldots k_Y, i=1,\ldots,k_U;\notag\\
&\quad\quad\quad\quad\quad\;\sum_{y=1}^{k_Y}p(Y=y\mid \tilde{U}=i) = 1,\quad i = 1,\ldots,k_U\notag,
\end{align}
where $\otimes_k$ denotes the Kroneker product. This completes the procedure.
\section{Deep Latent Variable Model Setup}\label{app:vae}

As described in Section \ref{sect:estimation}, we approximate the joint distribution $p(X, Y, C, W, \tilde U)$ using a model based on the Wasserstein Auto-Encoder~\citep[WAE;][]{tolstikhin2018wasserstein}. The overall algorithm is broken down into five main steps as shown below:

\emph{High-Level Pseudo-code}
\begin{enumerate}
    \item Train the WAE.
    \item Use the WAE's encoder to append $\tilde U$ to the source dataset: $\{(x_i,y_i,c_i,w_i)\}_{i=1}^n\to\{(x_i,y_i,c_i,w_i,\tilde{u}_i)\}_{i=1}^n$.
    \item Train $p(\tilde U\mid X)$ and $p(Y\mid X,\tilde U)$ using the dataset $\{(x_i,y_i,c_i,w_i,\tilde{u}_i)\}_{i=1}^n$.
    \item Estimate the likelihood ratios $q(\tilde U)/p(\tilde U)$ using the confusion matrix approach of \citep{lipton2018detecting} and $p(\tilde U\mid X)$.
    \item Predict $q(Y\mid X)$ using  (\ref{eq:goal}).
\end{enumerate}
We describe, next, how this approach works in detail. 

\subsection{Training the WAE}\label{app:vae_1}
First, we approximate the latent variable $\tilde U$. For that, we construct a variant of WAE, in which the assumptions of the graph in  Figure~\ref{fig:model}(c) are imposed. Specifically, while the encoder $p(\tilde{U}\mid X,C,Y,W)$ is an MLP $(X, Y, C, W)\to \tilde U$ with parameters $\phi$, the decoder has the structure: $\tilde U\to X$, $\tilde U\to W$, $(\tilde U, X)\to C$, and $(\tilde U, C)\to Y$, where each arrow is a separate MLP model with its own parameters, leading to the factorization:
\begin{align*}
    p(\mathcal V,\tilde{U}) = p(Y\mid C,\tilde{U})\,p(C\mid X,\tilde{U})\,p(X\mid \tilde{U})\,p(W\mid \tilde{U})\,p(\tilde{U}),
\end{align*}
where $\mathcal{V}=(X, Y, C, W)$ as discussed in Section \ref{sect:estimation}. The WAE is trained to minimize the reconstruction loss and the KL-divergence between $p(\tilde U)$ and its prior $\overline{p}(\tilde U)$ as shown in (\ref{eq:vae_obj}), where $p$ is averaged over the entire batch. In our experiments, the reconstruction loss is the mean square error (MSE) for $X$, cross-entropy for $Y$ and $W$ (because both are one-hot encoded), and the binary cross-entropy for every concept in $C$ (because $C$ is multi-label). We set the number of latent categories $|\tilde U|$ in the WAE to 10. All MLPs follow the architecture described in Appendix \ref{sec:appendix:experiments}.

As $\tilde{U}$ is discrete, to allow training with the reparameterization trick, we model $p(\tilde{U}\mid X,C,Y,W)$ using a Gumbel-Softmax distribution \citep{jang2016categorical,maddison2016concrete}.
We set the prior $\overline{p}(\tilde{U})$ to be a uniform categorical distribution over the categories of $\tilde{U}$. 

\subsection{Append the latent category $\tilde U$}
Given a trained WAE model, we next generate joint samples $\{(x_i,c_i,y_i,w_i,\tilde{u}_i)\}_{i=1}^n$ using the encoder $ p(\tilde U \mid X,C,Y,W)$. Specifically, for every tuple in the training set $(x, y, c, w)$, we generate $\tilde u\sim  p(\tilde U \mid X=x,C=c,Y=y,W=w)$ and append $\tilde u$ to the tuple $(x, y, c, w)$.

\subsection{Training $p(\tilde U\mid X)$ and $p(Y\mid X, \tilde U)$}
Given the dataset $\{(x_i,c_i,y_i,w_i,\tilde{u}_i)\}_{i=1}^n$, we train a model $p(\tilde U\mid X)$ and another model $p(Y\mid X,\tilde U)$. In our experiments, both models are MLPs (of the same architecture specified in Appendix \ref{sec:appendix:experiments}), which are trained by minimizing the cross-entropy loss. After training, we calibrate on the separate hold-out dataset using temperature scaling \citep{guo2017calibration}.

\subsection{Likelihood Ratios}
Next, we employ the confusion matrix approach of \citep{lipton2018detecting} to estimate the likelihood ratios $q(\tilde U)/p(\tilde U)$ by applying it on the model $p(\tilde U|X)$. Specifically, since $p(\tilde U|X)$ is trained on source data, we calculate the confusion matrix on source. Then, we run the model on unlabeled $X$ from the target domain $q$ and calculate its mean predictions. After that, we use Proposition 2 in \citep{lipton2018detecting} to estimate the likelihood ratios.

\subsection{Inference}
Finally, during inference, we use the two models $p(\tilde U\mid X)$ and  $p(Y\mid X,\tilde U)$ trained in the third step and the likelihood ratios $q(\tilde U)/p(\tilde U)$ obtained in the forth step, and predict $q(Y|X)$ using (\ref{eq:goal}).

\end{document}